\documentclass{article}

\usepackage{microtype}
\usepackage{graphicx}
\usepackage{subcaption}
\usepackage{booktabs}

\usepackage{multirow}
\usepackage{makecell}
\usepackage{placeins}
\usepackage{comment}

\usepackage{hyperref}

\usepackage[preprint]{icml2026}

\usepackage{amsmath}
\usepackage{amssymb}
\usepackage{mathtools}
\usepackage{amsthm}

\usepackage{thmtools,thm-restate}

\usepackage{lipsum}

\newcommand{\N}{\ensuremath{\mathbb{N}}}

\newcommand{\R}{\ensuremath{\mathbb{R}}}

\newcommand{\G}{\ensuremath{\mathcal{G}}}
\newcommand{\E}{\ensuremath{\mathcal{E}}}
\newcommand{\V}{\ensuremath{\mathcal{V}}}

\renewcommand{\a}{\alpha}
\renewcommand{\b}{\beta}

\newcommand{\x}{\ensuremath{\mathbf{x}}}

\renewcommand{\phi}{\ensuremath{\varphi}}

\newcommand{\multiset}[1]{\ensuremath{\{\!\!\{#1\}\!\!\}}}

\newcommand{\mA}{\mathbf{A}}
\newcommand{\mB}{\mathbf{B}}
\newcommand{\mU}{\mathbf{U}}
\newcommand{\mH}{\mathbf{H}}
\newcommand{\mS}{\mathbf{S}}
\newcommand{\mX}{\mathbf{X}}

\newcommand{\rs}{\mathsf{r}}
\newcommand{\mpas}{\mathsf{a}}
\newcommand{\Anorm}{\boldsymbol{\mathsf{A}}}
\newcommand{\mAg}{\mathbf{A}_\mathcal{G}}
\newcommand{\oper}{\boldsymbol{\mathsf{S}}_{\rs,\mpas}}
\newcommand{\up}{\sigma}

\newcommand{\An}{\tilde{\mathbf A}_\G}

\newcommand{\MIX}{\text{MIX}}
\newcommand{\SSM}{\text{SSM}}

\usepackage[capitalize,noabbrev]{cleveref}

\theoremstyle{plain}
\newtheorem{theorem}{Theorem}[section]
\newtheorem{proposition}[theorem]{Proposition}
\newtheorem{lemma}[theorem]{Lemma}

\theoremstyle{definition}
\newtheorem{definition}[theorem]{Definition}

\theoremstyle{remark}

\newcommand{\LapPE}{$\mathcal{O}(k^2 \cdot E)$}
\newcommand{\RWSE}{$\mathcal{O}(k \cdot N^2)$}

\usepackage[disable,textsize=tiny]{todonotes}

\icmltitlerunning{From Message-Passing to Linearized Graph Sequence Models}

\begin{document}

\twocolumn[
  \icmltitle{From Message-Passing to Linearized Graph Sequence Models}

  \icmlsetsymbol{equal}{*}

  \begin{icmlauthorlist}
    \icmlauthor{Joël Mathys}{yyy,equal}
    \icmlauthor{Basil Rohner}{yyy,equal}
    \icmlauthor{Saku Peltonen}{yyy}
    \icmlauthor{Roger Wattenhofer}{yyy}
  \end{icmlauthorlist}

  \icmlaffiliation{yyy}{ETH Zurich, Switzerland}

  \icmlcorrespondingauthor{Joël Mathys}{jmathys@ethz.ch}

  \icmlkeywords{Machine Learning, ICML}

  \vskip 0.3in
]

\printAffiliationsAndNotice{\icmlEqualContribution}

\begin{abstract}
Message-passing based approaches form the default backbone of most learning architectures on graph-structured data.
However, the rapid progress of modern deep learning architectures in other domains, particularly sequence modeling, raises the question of how graph learning can benefit from these advances. 
We introduce Linearized Graph Sequence Models, a framework that recasts message-passing graph computation from the perspective of sequence modeling to simplify architectural choices. 
Our approach systematically separates the computational processing depth from the information propagation depth, allowing core graph architectural decisions to be treated as sequence modeling choices. Specifically, we analyze, both empirically and theoretically, what sequence properties make methods effective for learning and preserving the graph inductive bias.
In particular, we validate our findings, demonstrating improved performance on long-range information tasks in graphs.
Our findings provide a principled way to integrate modern sequence modeling advances into message-passing based graph learning.
Beyond this, our work demonstrates how the separation of processing and information depth can recast central architectural questions as input modeling choices. 
\end{abstract}

\section{Introduction}

Message-passing neural networks (MPNNs) ~\citep{gilmer_neural_2017} have emerged as the dominant architecture for learning on graph-structured data, achieving remarkable success by iteratively aggregating information from local neighborhoods. To incorporate sufficient information from distant nodes in the graph, this sequential aggregation scheme must be repeated many times, resulting in deep networks. Unfortunately, as network depth increases, fixed-sized node embeddings have to keep track of an ever-expanding, exponentially growing receptive field, which ultimately leads to information loss and the collapse of learned representations ~\citep{alon_bottleneck_2020, arnaiz-rodriguez_oversmoothing_2025, di_giovanni_over-squashing_2023}. Moreover, these phenomena are closely related to the difficulty of properly optimizing deep networks due to vanishing gradients ~\citep{arroyo_vanishing_2025}. Meanwhile, modern deep learning has witnessed rapid architectural innovation, particularly in sequence modeling, where Transformers ~\citep{vaswani_attention_2017} and State-Space Models ~\citep{dao_transformers_2024} excel at capturing complex long-range dependencies while maintaining computational efficiency and hardware-friendly designs. This raises the question how graph learning can benefit from architectural advances in modern sequence modeling without losing the inductive bias which makes GNNs effective. 

\begin{figure}[t!]
  \vskip 0.2in
  \begin{center}
    \centerline{\includegraphics[width=\columnwidth]{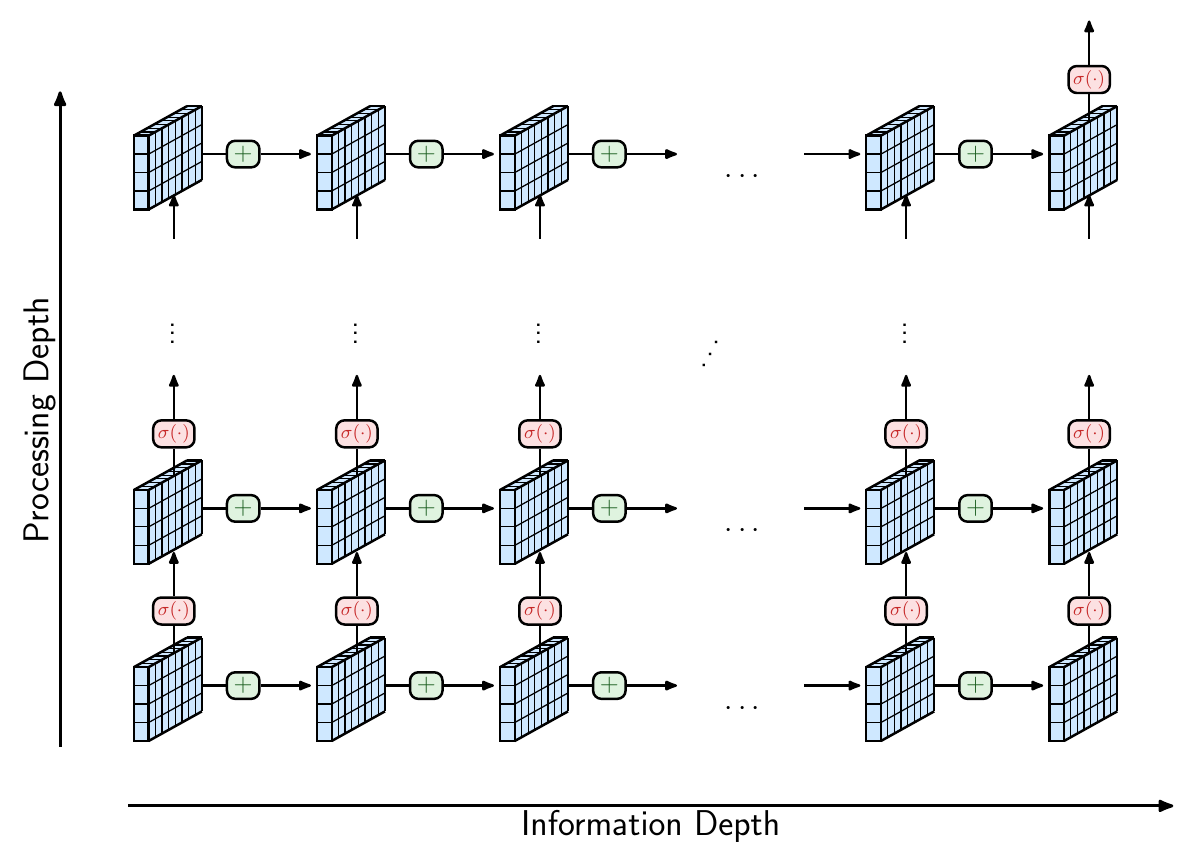}}
    \caption{
      LGSM decouples information propagation (horizontal) from non-linear processing (vertical). Unlike MPNNs that advance both dimensions simultaneously, LGSM enables information to flow across the graph through linearized computation before applying non-linear transformations. This shifts graph architectural focus to the impact of the input sequences.
    }
    \label{icml-historical}
  \end{center}
\end{figure}

Adapting modern sequence modeling techniques for graph learning is not straightforward due to the question on how best to preserve the graph modality and its inductive biases. One line of work attempts to directly model graphs as sequences by casting the graph itself into a sequence based representation. This includes graph transformers ~\citep{ma_graph_2023, rampasek_recipe_2022, stoll_generalizable_2025} that model nodes as tokens, or sequence extraction through random walks ~\citep{tonshoff_walking_2023, behrouz_graph_2024, kim_revisiting_2024, wang_non-convolutional_2024, chen_learning_2024}. Often these approaches struggle to fully preserve graph inductive biases and rely on auxiliary message-passing mechanisms or graph positional encodings to retain necessary structural information ~\citep{grotschla_benchmarking_2026}. An alternative direction focuses on the unrolled computation of message-passing neural networks itself, viewing the iterative updates as a sequential process. These include more sophisticated aggregation schemes ~\citep{ding_recurrent_2024} that leverage modern sequence models, focusing on the recurrent update and their ability to retain information over a prolonged time while maintaining trainability ~\citep{arroyo_vanishing_2025} or casting the architecture directly as a sequence first process \citep{ceni_message-passing_2025, eliasof_graph_2025}. However, particularly the latter requires stacking their derived modules sequentially, reintroducing the very problem of accumulated non-linearities that limits the depth and optimization.

In this work, we introduce Linearized Graph Sequence Models (LGSM), a framework that formulates message-passing based graph learning from the perspective of sequence modeling by explicitly decoupling two dimensions of computation: information depth, how information propagates through the graph, and processing depth, how many non-linear transformations are applied. In standard MPNNs, these two dimensions are tightly intertwined: each sequential layer propagates information one hop further and applies a transformation. LGSM separates these aspects by viewing the unrolled computation of message-passing as a sequence. Crucially, we linearize the computation along the sequence, both ensuring information flow and enabling efficient parallel computation through modern SSM implementations such as Mamba \citep{gu_mamba_2024}. 
This allows core architectural decisions in graph learning to be reframed as sequence modeling choices.

We analyze the computation within LGSM and directly relate the sensitivity of node embeddings \citep{di_giovanni_over-squashing_2023} to the design of the input sequence.
This naturally leads to the question of what makes a sequence extraction mechanism effective for graph learning.
We study both empirically and theoretically how the choice of sequence affects important properties such as stability, sensitivity, or influence, and how they can be improved. Motivated by these insights, we propose a sequence extraction method based on non-backtracking message exchange, which reduces the broadcasting of already known information. We further validate our framework empirically on the ECHO and LRIM benchmark ~\citep{miglior_can_2025, mathys_lrim_2025} on both synthetic graph property prediction tasks as well as real-world molecular datasets, where LGSM achieves strong performance, demonstrating that the separation of information and processing depth is an effective framework for graph learning.
Our main contributions are as follows:

\begin{itemize}
\item We introduce Linearized Graph Sequence Models (LGSM), a framework that decouples information propagation depth from processing depth and allows to recast core message-passing architectural decisions as sequence modeling choices instead.

\item We provide a thorough theoretical analysis connecting the choice of sequence extraction to node sensitivity and information flow, establishing how different extraction mechanisms impact learning on graphs. Further, we propose to use a sequence extraction method based on non-backtracking walks with favorable properties for information propagation.

\item We empirically validate our framework, confirming the effectiveness of our approach on both synthetic graph tasks as well as real-world molecular datasets.
\end{itemize}

\section{Background}

\subsection{Notation}

Let $\G = (\V, \E)$ be an undirected graph with vertex set $\V$
and edge set $\E$, where $|\V| = n$ and $|\E| = m$ denote the number of nodes and edges in the graph. The degree of a node $v$ is $\deg(v)$ and $\mathcal{N}^{\leq k}(v)$ denotes the set of all neighbors of $v$ within distance $k$. Similarly, we denote with $\mathcal{N}^{k}(v)$ the set of nodes for which there exists a path of length exactly $k$. We use $\mathbf A_{\G} \in \R^{n\times n}$ to denote the adjacency matrix defined by $(\mathbf A_{\G})_{u,v} = 1$ if and only if $\{u,v\} \in \E$. The degree matrix $\mathbf D_\G$ is diagonal with $(\mathbf D_\G)_{v,v} = \deg(v)$.
We denote a multiset with $\multiset\cdot$. 
For a matrix $A \in \R^{n \times n}$, $\|A\| = \max_{\|v\|_2=1} \|Av\|_2$ denotes the spectral norm, unless otherwise specified.

\subsection{Graph Neural Networks}

Graph Neural Networks that are based on message-passing~\citep{gilmer_neural_2017} learn node representations 
through iterative neighborhood aggregation. Given initial node features 
$ \mH^{(0)}=\mathbf{X} \in \mathbb{R}^{d \times n}$, each layer $t$ updates the node representations by 
aggregating information from neighbors, while also applying a learnable transformation:
\begin{align*}
    \mathbf{h}_v^{(t+1)} = \sigma\left(\mathbf{h}_v^{(t)}, 
    \phi\left(\multiset{\mathbf{h}^{(t)}_u : u \in \mathcal{N}(v)}\right)\right),
\end{align*}
where $\phi$ is a permutation-invariant aggregation function (e.g., sum, mean) and 
$\sigma$ is a learnable non-linear function, typically an MLP. For many common architectures, such as GCN \citep{kipf_semi-supervised_2017}, the update can be expressed in matrix notation.
\[
    \mathbf H^{(t+1)} = \sigma\left(\mathbf H^{(t)}\mAg \right)
\]
Here, the residual connection to the previous state is modeled as additional self-loops of $\G$.
Crucially, each MPNN layer simultaneously propagates information exactly one hop further in the 
graph \emph{and} applies a non-linear transformation. To incorporate information from 
node features at distance $r$, at least $r$ rounds of message-passing are required, tightly coupling the depth of 
non-linear processing to the information flow within the graph.

\subsection{State-Space Models}

State-Space Models (SSMs) are formulated to capture continuous linear dynamical systems. In order to apply them for sequence models, one considers a discretization of the system over time, essentially mapping the input sequence $\mathbf{x}^{(1)}, \ldots, \mathbf{x}^{(\ell)} \in \mathbb{R}^{d_I}$ 
to an output sequence $\mathbf{y}^{(1)}, \ldots, \mathbf{y}^{(\ell)} \in \mathbb{R}^{d_O}$ 
via the linear recurrence
\begin{align*}
    \mathbf{h}^{(t+1)} &= \mathbf{A}\mathbf{h}^{(t)} + \mathbf{B}\mathbf{x}^{(t+1)}\\
    \mathbf{y}^{(t+1)} &= \mathbf{C}\mathbf{h}^{(t+1)}
\end{align*}
where $\mathbf{A} \in \mathbb{R}^{d_H \times d_H}$, $\mathbf{B} \in \mathbb{R}^{d_H \times d_I}$, 
and $\mathbf{C} \in \mathbb{R}^{d_O \times d_H}$ are the system matrices, and 
$\mathbf{h}^{(t)} \in \mathbb{R}^{d_H}$ is the hidden state.
Interestingly, instead of naively unrolling the linear recurrence, it can be computed much more efficiently on modern hardware. Either by unrolling the recurrence to give an explicit convolution kernel or alternatively by leveraging the associativity of the update with a parallel scan computation ~\citep{gu_mamba_2024}. Moreover, it is possible to switch between the formulation to balance speed and memory requirements during training and inference respectively. 

A key property of modern SSMs is their ability to capture long-range dependencies through appropriate initializations such as the HiPPO framework ~\citep{gu_hippo_2020, gu_efficiently_2021}. These enable the SSM to closely approximate the history of the input sequence using orthogonal polynomials. Moreover, with the introduction of Mamba ~\citep{gu_mamba_2024} that incorporates an input dependent selective update, SSMs achieve performances comparable to the transformer architecture.

\section{Related Work}
There exist different lines of work to include modern architectures for graph learning, however, they do differ quite a bit in the modeling perspective. Random walk based approaches, model the graph modality as a collection of sequences ~\citep{behrouz_graph_2024, chen_learning_2024} and then use the SSM to process the walk sequence. As such, these approaches are orthogonal to our investigation into message-passing centric computation. On the other hand \citet{huang_what_2024} translates the convolution view of the SSM from linear sequences to graph topologies. Closely related are the approaches that include SSMs in the design of the aggregation function \citep{ding_recurrent_2024}, but otherwise do not consider the full architecture. Some works consider SSMs to process the output of  sequential non-linear MPNN \citep{eliasof_graph_2025, arroyo_vanishing_2025} or even linearized variants \citep{ceni_message-passing_2025}. However, they do not consider the separation of processing and information gathering and stack the blocks sequentially. This reintroduces non-linearities and prevents fully parallelizable processing and complete theoretical analysis along the full information depth. We refer to Appendix \ref{app:related work} for an in depth discussion of related works.

\section{Method}

We propose the Linearized Graph Sequence Model (LGSM) framework, to reformulate message passing based graph learning with a focus on sequence modeling. This perspective explicitly decouples two fundamental dimensions: \textit{processing depth}, 
how much compute we spend on transforming information, and \textit{information depth}, how information propagates throughout the graph. Before presenting our model in detail, we provide a motivating example showing how graph computation can be viewed as unrolled sequence computation. 

\begin{figure*}[ht]
  \centering
  \vskip 0.2in
  \makebox[0.95\textwidth][c]{
  \begin{subfigure}[t]{0.30\textwidth}
    \centering
    \includegraphics[width=\linewidth]{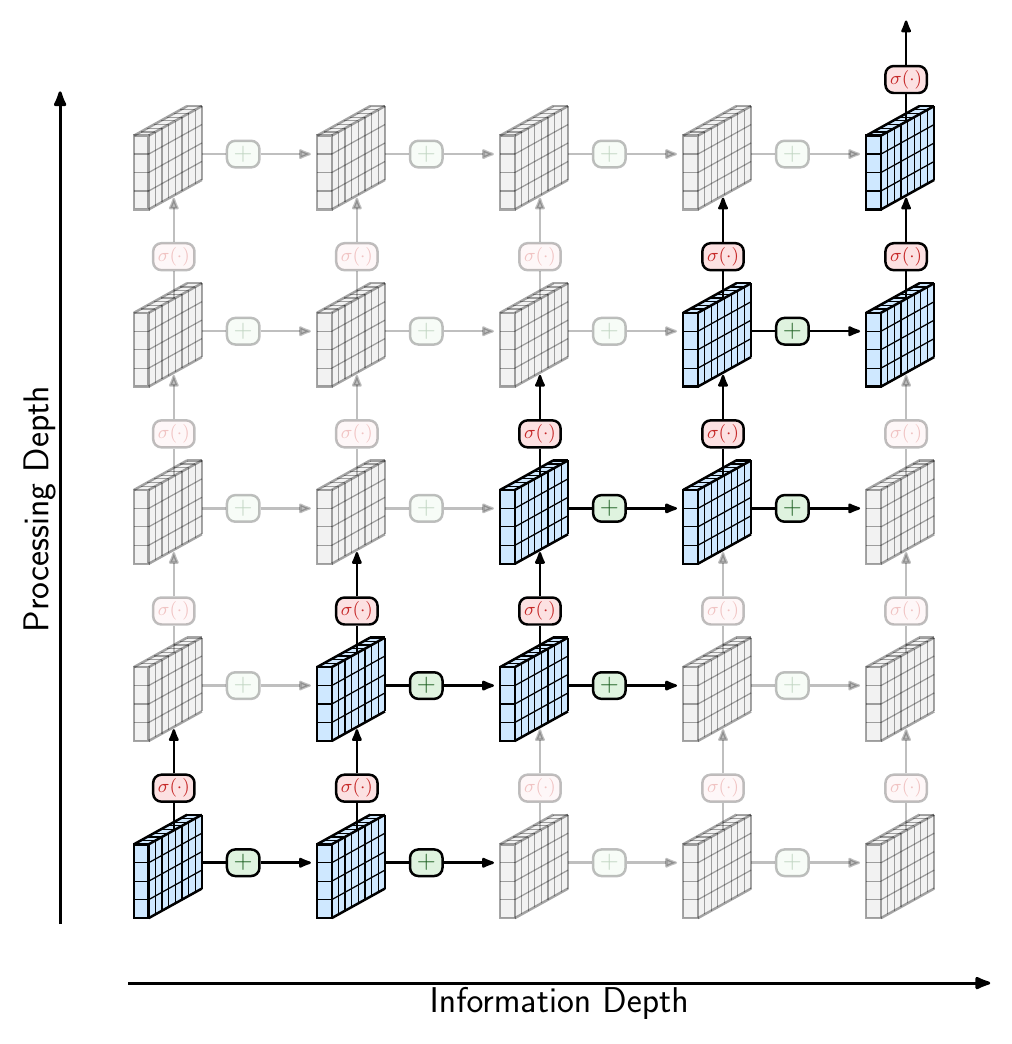}
    \caption{MPNN}
    \label{fig:storyline-1}
  \end{subfigure}
  \hfill
  \begin{subfigure}[t]{0.30\textwidth}
    \centering
    \includegraphics[width=\linewidth]{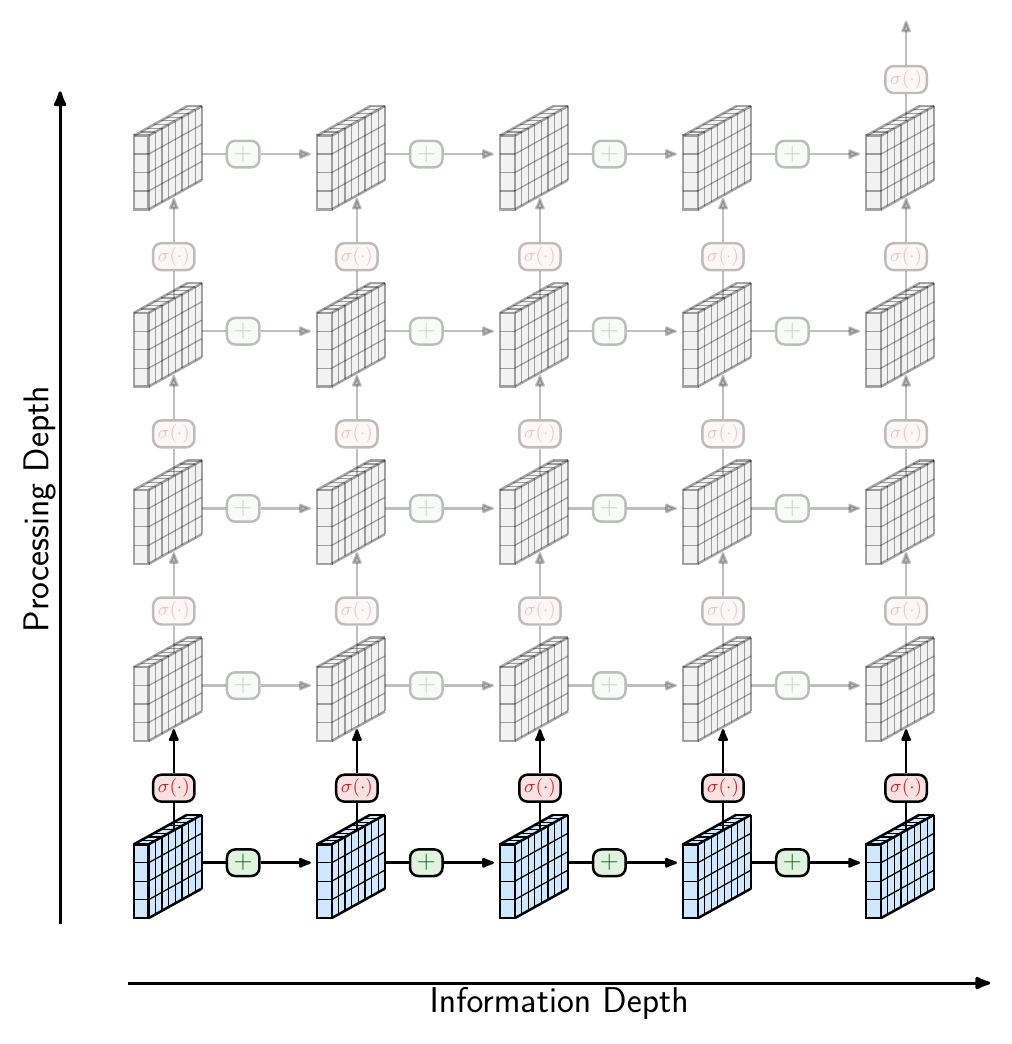}
    \caption{Linear MPNN}
    \label{fig:storyline-2}
  \end{subfigure}
  \hfill
  \begin{subfigure}[t]{0.30\textwidth}
    \centering
    \includegraphics[width=\linewidth]{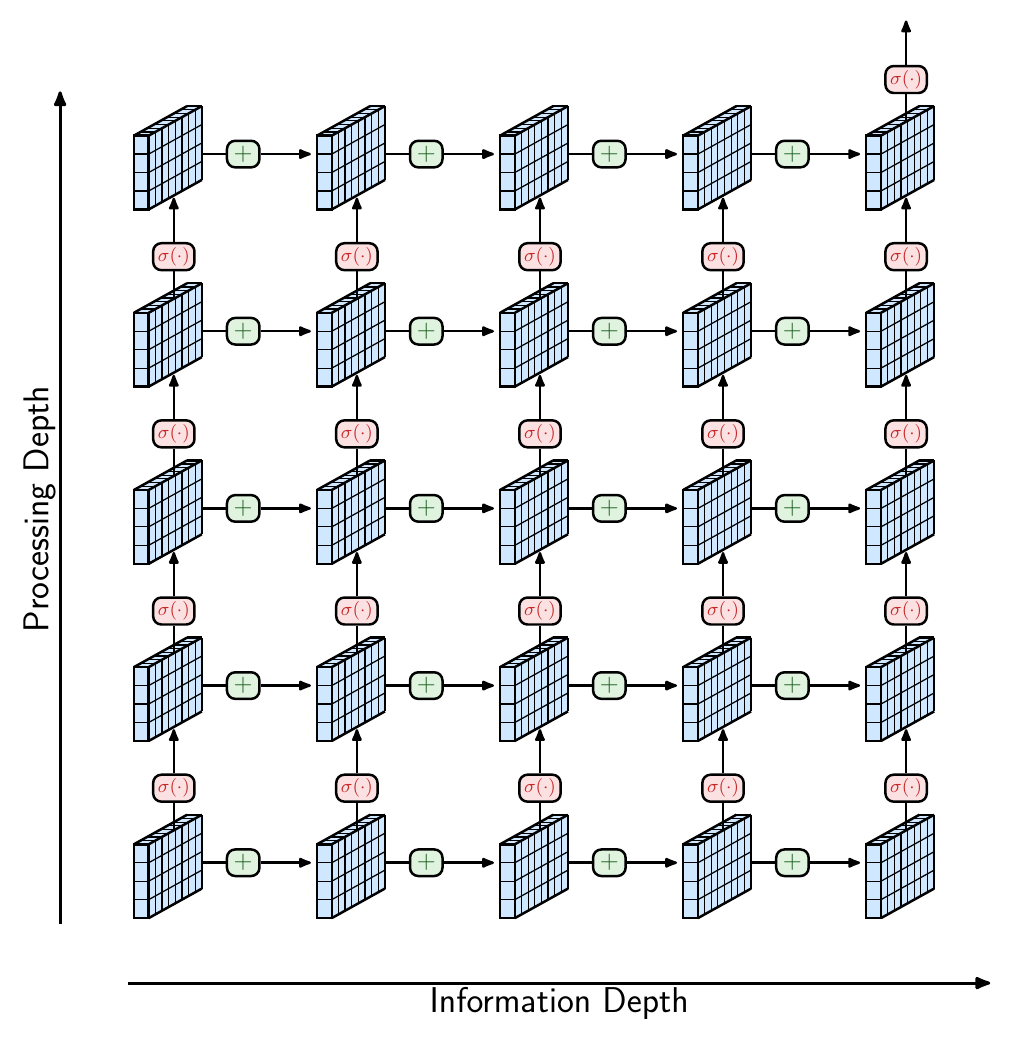}
    \caption{LGSM (our method)}
    \label{fig:storyline-3}
  \end{subfigure}
    }
  \caption{
Visualization of computation in graph neural networks with respect to information depth and processing depth. Standard MPNNs couple both dimensions, propagating and updating information through non-linear transformations at the same time, advancing diagonally. Linear MPNNs propagate information without any non-linearities, but have limited transformation capabilities. Our method, stacks the linearized rows to combine both well-conditioned linear information flow with non-linear processing capabilities.
}
  \label{fig:storyline}
\end{figure*}

The key observation is that stacking $L$ MPNN layers can be viewed as a recurrent graph update. Consider a GCN-based update rule with skip connection from the input features $\mathbf X$, with initial $\mathbf H^{(0)} = \mathbf 0$. Each layer propagates information one hop and then applies a non-linearity. As such, the computational structure resembles that of a recurrent neural network, where the sequence corresponds to successive rounds of neighborhood aggregation. However, unlike standard RNNs that receive an external input at each step, the GCN derives an input implicitly. As we show next, linearizing this recurrence results in a sequence structure.

\begin{align*}
\mathbf{H}^{(t+1)}
&= \sigma\!\Big(
\underbrace{\mH^{(t)}}_{\substack{\text{previous}\\\text{hidden state}}},
\underbrace{\mU^{(t)}}_{\substack{\text{next}\\\text{input}}}
\Big) &\text{RNN formulation}\\
\mH^{(t+1)} &= \mX + \sigma(\mH^{(t)}\mAg) &\text{GCN formulation}
\end{align*}

 Where $\mU$ corresponds to the input sequence of the RNN. We visualize this unrolled computation as a grid in Figure \ref{fig:storyline-1}, where the horizontal axis represents information depth and the vertical axis represents processing depth. The MPNN advances both dimensions simultaneously at each step. This leads to a diagonal, staircase-like computational path. Unfortunately, this exhibits two inherent drawbacks. First, the computation has to be done in sequential order, which is hard to parallelize on modern hardware. Second, due to the repeated stacking of non-linearities, it can be hard to properly optimize these networks due to instability or vanishing gradients. Following modern state-space approaches \citep{gu_efficiently_2021}, we address these concerns by linearizing the update along the information axis. Removing the non-linearity, akin to \citet{ceni_message-passing_2025}, allows information to propagate without accumulating non-linear transformations. 
 
\begin{align*}
\mathbf{H}^{(t+1)}
&= \mA \mH^{(t)}+ \mB \mU^{(t)} &\text{linearized RNN} \\
&= \sum_{k=0}^{t}\mA^{k}\mB\mU^{(t-k)}\\
\mH^{(t+1)} &= \mB\mX + \mA\mH^{(t)}\mAg &\text{linearized GCN}\\
&= \sum_{k=0}^{t}\mA^k\mB \underbrace{\mX\mAg^k}_{\mU^{(t-k)}}
\end{align*}
Where $\mA,\mB \in \mathbb{R}^{d\times d}$ are learnable matrices. Through this lens, the linearized MPNN  is equivalent to applying an SSM to a graph-based input sequence, where each sequence element contains node features propagated through the graph: $\mU_{L:0} = \mathbf X,\mathbf X\mAg, \mathbf X \mAg^{2}, \ldots, \mathbf X\mAg^{L}$. In the grid of Figure \ref{fig:storyline-2}, this corresponds to only traversing the horizontal axis, information flows through the graph without any non-linear transformations.
We use this perspective to build our framework. Similarly to the architectural design of modern SSMs, we stack multiple such linearized horizontal blocks on top of each other and connect them with non-linear transformations along the processing depth. The resulting model architecture depicted in Figure \ref{fig:storyline-3} now precisely separates information and processing depth. 

\subsection{Linearized Graph Sequence Model}

We now describe the LGSM architecture in detail. Given an input graph $\G = (\V, \E)$ with node features $\mX \in \mathbb{R}^{d \times n}$, an LGSM transforms these features through four components: sequence extraction, SSM processing, a feed-forward network, and graph mixing. In favor of clarity we omit some of the common optimization techniques such as skip connections and normalizations in the mathematical definition. A conceptual overview is shown in Figure \ref{fig:architecture}.

\textit{Sequence Extraction}: The Graph Sequence Encoder gets the input graph with its node features and converts them into a sequence of $L$ node embeddings $\smash{\mS^{(i)}_{\text{in}}\in \mathbb{R}^{d\times n}}$. 
\begin{align*}
    \mS^{(0)}_{\text{in}}, \mS^{(1)}_{\text{in}}, \ldots, \mS^{(L-1)}_{\text{in}} = \textsc{Seq}(\mX, \mAg)
\end{align*}
 In the next section, we discuss how to extract an appropriate sequence in more detail. A simple choice could be powers of the adjacency matrix: $\smash{\mS^{(i)}_{\text{in}} = \mX \mAg^{i}}$.

\textit{SSM Layer}: The sequence is then processed by a state-space model, which operates independently on each node's sequence. We use Mamba2 ~\citep{dao_transformers_2024} as the SSM module. The node dimension is viewed as a batch dimension and the SSM processes $L$ sequence elements of dimension~$d$.
\begin{align*}
    \mS^{(0)}_{\text{SSM}}, \mS^{(1)}_{\text{SSM}}, \ldots, \mS^{(L-1)}_{\text{SSM}}  = \textsc{SSM}(\mS^{(0)}_{\text{in}}, \mS^{(1)}_{\text{in}}, \ldots, \mS^{(\ell)}_{\text{in}})
\end{align*}
This enables information to flow along the sequence dimension efficiently via parallel scan computation.
Afterwards, each node embedding is fed through a feed-forward network $\sigma_1$ for non-linear processing.
\begin{align*}
    \mS^{(i)}_{\text{FFN}} = \sigma_1(\mS^{(i)}_{\text{SSM}})
\end{align*}

\textit{Graph Mixing}: Finally, to have information mixing between different nodes, we update the sequence elements
with the graph-propagated features of the preceding element before we apply another FFN and feed the output to the next block. Note that because the previous state is mixed between nodes this does not further increase the information depth.
\begin{align*}
    \mS^{(0)}_{\text{MIX}} = {\mS^{(0)}_{\text{FFN}}} \qquad \mS^{(i)}_{\text{MIX}} = \sigma_2\big({\mS^{(i)}_{\text{FFN}}} + {\mS^{(i-1)}_{\text{FFN}}} \mAg\big)
\end{align*}

A complete LGSM model stacks $D$ such blocks, where the output of one block becomes the input to the next. The final node representations are obtained from the last sequence element $\smash{\mS_{\text{out}}^{(\ell)}}$ and passed to a graph decoder to predict node or graph level embeddings.

\begin{figure}[ht!]
  \vskip 0.2in
  \begin{center}
    \centerline{\includegraphics[width=0.9\columnwidth]{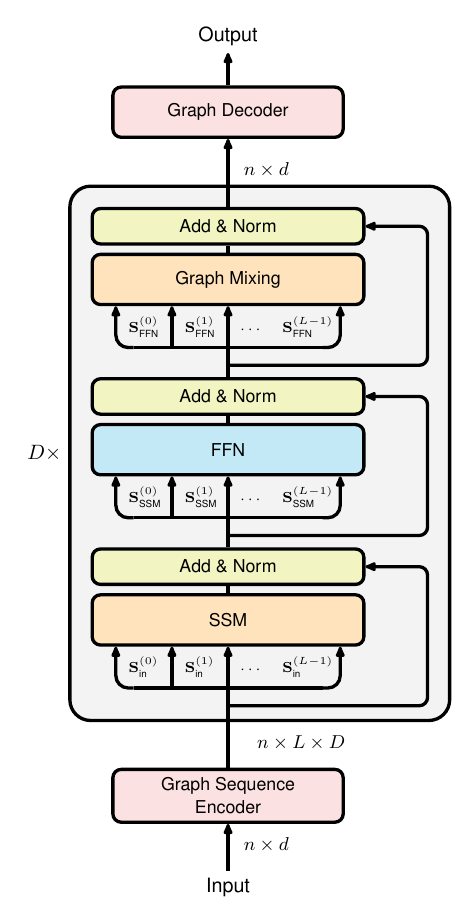}}
\caption{Overview of the LGSM architecture. The Graph Sequence Encoder converts graph input into a sequence. Then, each block applies an SSM for each node along the sequence dimension with additional  non-linear transformations and graph mixing. The number of stacked blocks controls the processing depth while the length of the sequence controls information depth.}
    \label{fig:architecture}
  \end{center}
\end{figure}

The linearization along the information depth provides flexibility in how computation is performed. The only location, where the sequence elements all depend on each other is within the SSM module. Note, that efficient SSM implementations give the option to compute row-wise, using efficient and fast parallel scans processing the entire sequence length at once. Alternatively, the computation can be unrolled column-wise, processing all nodes for a given sequence position before advancing to the next. This allows for flexibility and balancing memory and speed depending on hardware constraints and sequence length.

\section{Theoretical Evaluation}

In this section, we analyze the LGSM architecture to understand how its computation makes use of the information depth. For traditional MPNNs, prior work by ~\citet{di_giovanni_over-squashing_2023} gives an upper bound for local node sensitivity:

\begin{align*}
    \left\| \frac{\partial \mathbf{h}_{v}^{(m)}}{\partial \mathbf{h}_{u}^{(0)}}\right\| \leq  (\underbrace{\vphantom{\oper^m}c_{\up}wp}_{\mathrm{model}})^{m}\underbrace{(\oper^{m})_{vu}}_{\mathrm{topology}},
\end{align*}
 where $\oper := c_{\rs}\mathbf{I} + c_{\mpas}\Anorm\in\R^{n\times n}$ is the graph shift operator and $c_\sigma, w, p$ are parameters of the models non-linearity. For nodes at least distance $d$ apart, this bound implies that sensitivity decays with $d$.

We now characterize sensitivity for LGSM. Consider a single LGSM block 
with sequence length $L$. We quantify the dependence of the final output embedding $\mathbf y_v$ on the input feature $\mathbf x_w$.

\begin{restatable}[Local Sensitivity for $1$-Block LGSM]{theorem}
{sensOneLayer}
    \label{thm:sens_1gsm}
        Consider a $1$-block LGSM of sequence length $L$ with an SSM defined by system matrices $(\mathbf A, \mathbf B, \mathbf C)$. Denote the input and output features by $\mathbf X, \mathbf S_{\text{out}}^{(L)} \in \R^{n\times d}$, respectively, where $\mathbf x_v = (\mathbf X)_{v,:}$, is the feature vector of vertex $v \in \mathcal V$.
        Suppose that the FNNs $\sigma_\SSM, \sigma_\MIX$ are $\mu_\SSM$- and $\mu_\MIX$-regular, respectively, 
        and that $\mathbf A$ is normal with spectral radius $\rho(\mathbf A) \approx 1$. Then, for any vertex $v \in \mathcal V$:
    \begin{align*}
        \left\|\frac{\partial \mathbf S_{\text{out}}^{(L)}}{\partial \mathbf x_{v}}\right\|
        \le \gamma\sum_{k=0}^L\left\|\frac{\partial \mathbf S_{\text{in}}^{(k)}}{\partial \x_v}\right\| + \gamma\|\mathbf A_\G\|\sum_{k=0}^{L-1}\left\|\frac{\partial \mathbf S_{\text{in}}^{(k)}}{\partial \x_v}\right\|
    \end{align*}

    where $\gamma$ is a constant in $\|\mathbf B\|, \|\mathbf C\|, \mu_{\textup{SSM}}$, and $\mu_{\textup{MIX}}$.
\end{restatable}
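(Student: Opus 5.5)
We plan to trace $\mathbf S_{\text{out}}^{(L)}$ back to the inputs $\mathbf S_{\text{in}}^{(k)}$ by the chain rule, bound each Jacobian factor in spectral norm, and collect constants into $\gamma$. We treat the SSM as the linear time-invariant recurrence $\mathbf h^{(t+1)} = \mathbf A\mathbf h^{(t)} + \mathbf B \mathbf s^{(t+1)}$, $\mathbf y^{(t+1)}=\mathbf C\mathbf h^{(t+1)}$, applied independently per node. Unrolling gives
\[
  \mathbf S_{\text{SSM}}^{(j)} = \sum_{k=0}^{j} \mathbf C\mathbf A^{j-k}\mathbf B\, \mathbf S_{\text{in}}^{(k)} ,
\]
with the node index acting as a batch dimension. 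We read $\mu$-regularity of the FFNs as a uniform Lipschitz bound on their Jacobians, $\|\partial\sigma(\mathbf z)/\partial \mathbf z\|\le\mu$.

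\textbf{Step 1 (SSM sensitivity).} Differentiate $\mathbf S_{\text{SSM}}^{(j)}$ with respect to $\mathbf x_v$ and apply the triangle inequality. This gives
\[
  \bigl\|\partial \mathbf S_{\text{SSM}}^{(j)}/\partial \mathbf x_v\bigr\| \le \|\mathbf C\|\|\mathbf B\|\sum_{k\le j}\|\mathbf A^{j-k}\|\,\bigl\|\partial \mathbf S_{\text{in}}^{(k)}/\partial\mathbf x_v\bigr\|.
\]
Because $\mathbf A$ is normal, $\|\mathbf A^{i}\| = \rho(\mathbf A)^{i}$. The assumption $\rho(\mathbf A)\approx 1$ then lets us replace $\|\mathbf A^{i}\|$ by $1$, or by $\max(1,\rho)^{L}$, which we absorb into $\gamma$. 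This is where normality is essential: for non-normal $\mathbf A$, $\|\mathbf A^i\|$ can exceed $\rho^i$ by transient growth. The SSM acts per node, so the operator acting on the full $n\times d$ array factors as $\mathbf I_n\otimes(\cdot)$, and the norm does not pick up any graph factor.

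\textbf{Step 2 (FFN and mixing).} The FFN layer contributes a factor $\mu_{\SSM}$ at each position, so $\|\partial \mathbf S_{\text{FFN}}^{(j)}/\partial\mathbf x_v\| \le \mu_{\SSM}\|\partial\mathbf S_{\text{SSM}}^{(j)}/\partial\mathbf x_v\|$. The output is
\[
  \mathbf S_{\text{out}}^{(L)} = \mathbf S_{\text{MIX}}^{(L)} = \sigma_2\bigl(\mathbf S_{\text{FFN}}^{(L)} + \mathbf S_{\text{FFN}}^{(L-1)}\mathbf A_\G\bigr).
\]
Right-multiplication by $\mathbf A_\G$ has operator norm $\|\mathbf A_\G\|$, so the chain rule gives
\[
  \Bigl\|\frac{\partial \mathbf S_{\text{out}}^{(L)}}{\partial \mathbf x_v}\Bigr\| \le \mu_{\MIX}\Bigl(\bigl\|\partial \mathbf S_{\text{FFN}}^{(L)}/\partial\mathbf x_v\bigr\| + \|\mathbf A_\G\|\,\bigl\|\partial \mathbf S_{\text{FFN}}^{(L-1)}/\partial\mathbf x_v\bigr\|\Bigr).
\]

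\textbf{Step 3 (combine).} Substitute Step 1 into Step 2 with $j=L$ and $j=L-1$. The first term gives $\sum_{k=0}^{L}$ and the second gives $\|\mathbf A_\G\|\sum_{k=0}^{L-1}$, with
\[
  \gamma = \mu_{\MIX}\,\mu_{\SSM}\,\|\mathbf B\|\,\|\mathbf C\|
\]
(times the bounded power factor from $\rho\approx1$). This is exactly the claimed form.

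\textbf{Main obstacle.} The delicate point is justifying $\|\mathbf A^{i}\|\le 1$ uniformly, or at least bounded independently of $L$. We will state it precisely as $\rho(\mathbf A)\le 1$, which normality turns into $\|\mathbf A^i\|\le1$, and note that $\rho$ slightly above $1$ only inflates $\gamma$ by $\rho^L$. A secondary bookkeeping issue is indexing: the sequence runs over $0,\dots,L$ in the statement versus $0,\dots,L-1$ in the architecture. We will adopt the statement's convention that the output position is $L$ with inputs $0,\dots,L$. Skip connections and normalizations, omitted in the definition, are ignored as in the paper.
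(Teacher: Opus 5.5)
Your proposal is correct and follows the paper's proof essentially step for step: the paper's helper lemma likewise bounds $\|\partial\sigma_{\text{SSM}}(\alpha_t)/\partial \mathbf x_v\|$ via the unrolled SSM sum, with normality giving $\|\mathbf A^k\|=\rho(\mathbf A)^k$ and $\rho\approx 1$ removing the power. The theorem's proof then applies the chain rule through $\sigma_{\text{MIX}}$ and the triangle inequality on the two mixing terms, arriving at the same $\gamma=\mu_{\text{MIX}}\mu_{\text{SSM}}\|\mathbf B\|\|\mathbf C\|$. Your explicit handling of $\rho$ slightly above $1$, absorbing $\max(1,\rho)^L$ into $\gamma$, is just a more careful version of the paper's ``$\rho(\mathbf A)\approx 1$'' step.
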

The bound reveals two complementary mechanisms which are at play. The first term (weighted by $\mAg$) captures sensitivity through graph mixing, which propagates information between nodes via the adjacency structure. The second term captures sensitivity through the SSM dynamics, which processes the sequence regardless of graph topology.
Therefore, the SSM ensures that there is a direct sensitivity contribution between the nodes that depends on the initial extracted sequence. This ensures sensitivity is maintained as long as there is at least one sequence element $j$ which is sufficiently dependent on the pair $v,w \in \V$ so that $\left\|\frac{\partial\mathbf s^{(j)}_v}{\partial\mathbf x_w}\right\|$ is large.

We extend our analysis to the full LGSM model with stacked blocks. For clarity, we present the bound without graph mixing here. For the complete bound and  proof we refer to the Appendix. Note that the graph mixing component only contributes additional gradient flows, thus
improving the bound in \Cref{thm:sens_noGM}.

\begin{restatable}[Local Sensitivity for LGSM]{theorem}{sensNoMix}
    \label{thm:sens_noGM}
    Consider a $D$-block LGSM without graph mixing layers and of sequence length $L$,
    with an SSM defined by system matrices $(\mathbf A, \mathbf B, \mathbf C)$.
    Denote the input and output features by $\mathbf X, \mathbf S_{\text{out}}^{(L, D)} \in \R^{n\times d}$, respectively, where $\mathbf x_v = (\mathbf X)_{v,:}$, is the feature 
    vector of vertex $v \in \mathcal V$. Suppose that the FFN $\sigma_\SSM$ is $\mu_\SSM$-regular and that 
    $\mathbf A$ is normal with spectral radius $\rho(\mathbf A) \approx 1$. Then, for any vertex $v \in \mathcal V$:
\begin{align*}
    \left\|\frac{\partial \mathbf S_{\text{out}}^{(L, D)}}{\partial \mathbf x_v}\right\| \le \gamma^D\sum_{k=0}^L \binom{L-k+D-1}{D-1} \left\|\frac{\partial \mathbf S_{\text{in}}^{(k,1)}}{\partial \mathbf x_v}\right\|
\end{align*}
    where $\gamma$ is a constant in $\|\mathbf B\|, \|\mathbf C\|$, and $\mu_{\textup{SSM}}$.
\end{restatable}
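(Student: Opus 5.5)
We prove the bound by induction on the number of blocks $D$. Along the way we track a per-position bound on the Jacobian of every sequence element, not only the final one. Without graph mixing, block $j$ maps the input sequence $\mathbf S_{\text{in}}^{(0,j)},\dots,\mathbf S_{\text{in}}^{(L,j)}$ to the output sequence $\mathbf S_{\text{out}}^{(i,j)} = \sigma_\SSM\big(\mathbf y^{(i,j)}\big)$. Here $\mathbf y^{(i,j)} = \mathbf C\sum_{k=0}^{i}\mathbf A^{i-k}\mathbf B\,\mathbf S_{\text{in}}^{(k,j)}$ is the unrolled linear recurrence, and block $j+1$ takes $\mathbf S_{\text{in}}^{(i,j+1)}=\mathbf S_{\text{out}}^{(i,j)}$. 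We apply the chain rule and submultiplicativity to this map. The regularity of $\sigma_\SSM$ bounds its Jacobian norm by $\mu_\SSM$. Since $\mathbf A$ is normal, $\|\mathbf A^{i-k}\| = \rho(\mathbf A)^{i-k} \approx 1$. Together these give the one-block, per-position estimate
\begin{align*}
\left\|\frac{\partial \mathbf S_{\text{out}}^{(i,j)}}{\partial \mathbf x_v}\right\| \le \gamma \sum_{k=0}^{i}\left\|\frac{\partial \mathbf S_{\text{in}}^{(k,j)}}{\partial \mathbf x_v}\right\|, \qquad \gamma := \mu_\SSM\|\mathbf C\|\|\mathbf B\|.
\end{align*}
This is exactly the SSM part of the argument behind \Cref{thm:sens_1gsm}, with the mixing term dropped. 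We keep this causal, prefix-sum form for every position $i$, not just $i=L$, because that is what the induction needs.

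The inductive claim is that for every $i\le L$ and $D\ge 1$,
\begin{align*}
\left\|\frac{\partial \mathbf S_{\text{out}}^{(i,D)}}{\partial \mathbf x_v}\right\| \le \gamma^D \sum_{k=0}^{i}\binom{i-k+D-1}{D-1}\left\|\frac{\partial \mathbf S_{\text{in}}^{(k,1)}}{\partial \mathbf x_v}\right\|.
\end{align*}
The base case $D=1$ is the estimate above, since $\binom{i-k}{0}=1$.

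For the step from $D$ to $D+1$, we apply the one-block estimate to block $D+1$. Then we substitute the inductive hypothesis for each $\mathbf S_{\text{in}}^{(m,D+1)} = \mathbf S_{\text{out}}^{(m,D)}$ and exchange the order of summation. This leaves, for each fixed $k$, the coefficient $\gamma^{D+1}\sum_{m=k}^{i}\binom{m-k+D-1}{D-1}$. The hockey-stick identity $\sum_{r=0}^{s}\binom{r+D-1}{D-1}=\binom{s+D}{D}$, applied with $s=i-k$, gives $\binom{i-k+D}{D}$, which closes the induction. Setting $i=L$ yields the theorem.

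The main obstacle is making the treatment of $\rho(\mathbf A)\approx 1$ honest. Strictly, $\|\mathbf A^{t}\|=\rho(\mathbf A)^t$ for normal $\mathbf A$, which is only bounded by $\max(1,\rho(\mathbf A))^L$. We would absorb this factor into $\gamma$, or note that $\rho(\mathbf A)\le 1$ in the intended regime, so that $\|\mathbf A^t\|\le 1$ uniformly. A second point to check is that the Jacobians are with respect to the original $\mathbf x_v$ throughout the stack, so the chain rule composes cleanly across blocks. The reason is that without graph mixing each block acts node-wise, so no cross-node terms appear beyond those already present in $\mathbf S_{\text{in}}^{(k,1)}$. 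The combinatorial identity itself is routine.
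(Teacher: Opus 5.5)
Your proposal is correct and follows essentially the same route as the paper. The paper also obtains the per-position recurrence $G(i,D)\le\gamma\sum_{k=0}^{i}G(k,D-1)$ from the one-block SSM estimate (Lemma~\ref{lemma:sens_helper}), then closes the induction on $D$ by swapping the sums and applying the hockey-stick identity. The differences are cosmetic: you start the induction at $D=1$ instead of $D=0$, which avoids the paper's $\binom{-1}{-1}$ conventions, and you spell out how to make the $\rho(\mathbf A)\approx 1$ step rigorous, which the paper glosses over.
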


The bound separates into two components consisting of the architectural term $\delta^d$ and 
the sequence dependent sensitivities $\left\|\frac{\partial \mathbf s_v^{(k)}}{\partial \mathbf x_w}\right\|$. The binomial 
coefficients count the number of distinct computational paths from input to output across the processing and information depth dimension depicted in Figure~\ref{fig:storyline-3}.

 This contrasts with standard MPNNs, where the sensitivity is bounded by the powers of the graph operator. Although LGSM also incorporates graph-dependent propagation through graph mixing layers, the SSM component ensures that sensitivity is not only determined by it. Instead, the bound depends on all input sequence elements, relying on the fact that at least one element captures the node pair relationship. Therefore, the LGSM framework allows us to shift key graph learning questions toward the appropriate modeling of the sequence extraction.

\section{Sequence Extraction Mechanisms}

Our theoretical analysis reveals that the sensitivity of the final node embeddings has a direct dependence on all elements of the extracted sequence. Specifically, for a pair of nodes $v,w \in \V$ the sensitivity $\left\|\frac{\partial \mathbf y_v}{\partial \mathbf x_w}\right\|$ depends on a weighted sum over $\left\|\frac{\partial \mathbf s_v^{(k)}}{\partial \mathbf x_w}\right\|$. Crucially, it suffices for at least one sequence element to capture strong interaction between the nodes to effectively capture dependencies between them. This motivates a more systematic analysis of what properties make a sequence extraction mechanism effective for graph learning. In the following, we identify five desirable properties for effective sequences: 
\begin{itemize}
    \item \textbf{Efficiency} The sequence should be computable through preprocessing or have an efficient non-sequential formulation that leverages modern hardware. This is to avoid sequence extraction becoming a bottleneck for the overall architecture.
    \item \textbf{Stability} Sequence elements should be numerically bounded to avoid destabilizing the computation, particularly as sequence length increases.
    \item \textbf{Informativeness} Each sequence element should contain meaningful information and be sufficiently distinct from other elements. 
    \item \textbf{Sensitivity} For any pair of nodes $v,w \in \V$ the node sensitivity $\left\|\frac{\partial \mathbf s_v^{(k)}}{\partial \mathbf x_w}\right\|$ should be greater than zero for at least one sequence element $k$. 
    \item \textbf{Relative Influence} The signal from relevant node pairs should be sufficiently strong considering other nodes in the graph. More specifically consider
    \begin{align*}
    I_{v,k}(w) = \mathbf e^T \left[\frac{\partial \mathbf s^{(k)}_v}{\partial \mathbf x_w}\right] \mathbf e \Big / \left(\sum_{u \in \V} \mathbf e^T \left[\frac{\partial \mathbf s^{(k)}_v}{\partial \mathbf x_u}\right] \mathbf e\right)
    \end{align*}
    as the relative influence of $w$ on $v$ on the $k$th sequence element, following the influence distribution of ~\citep{xu_representation_2018}. Here $\mathbf e$ is the all ones vector. 
\end{itemize}

The most natural choice for sequence extraction is perhaps using the powers of the adjacency matrix $\mS^{(i)}_A = \mAg^i \mX$ or its normalized version $\smash{\mS^{(i)}_{\hat{A}} = (\mathbf D_\G^{-\frac{1}{2}} \mathbf A_\G \mathbf D_\G^{-\frac{1}{2}})^i \mX}$. These correspond closely to the message passing scheme incorporated in standard MPNN architectures, where the $t$-th iteration aggregates information along all walks of length $t$.

Considering the properties above, both sequences can in principle be computed efficiently through direct powers or preprocessing of the adjacency powers. Recall, that in both variants the $t$-th element of the sequence corresponds to the aggregation along all walks of length $t$. This can quickly lead to instabilities in $\mS^{(i)}_A$ as the number of walks increases exponentially, but can be addressed with proper degree normalization. If a pair of nodes is connected, eventually there will be a sequence element which ensures non-zero sensitivity. Note that this interaction might still be small as it is proportional to the ratio of paths of length $t$ between $u$ and $v$ over all possible paths of length $t$ originating from them.  

While the adjacency powers satisfy efficiency, stability, and sensitivity, let us consider the following scenario for the last two criteria. The adjacency powers always considers \textit{all} paths of length $t$. This includes paths that immediately "backtrack", immediately traverse an edge in the other direction again, and therefore contain a short two cycle. As a consequence, if a node $v$ can be reached at time $t_0$ from $v$, then for all $t'>0$ the entry will always be non-zero for $(\mAg^{t_0+2t'})_{uv}$. Each following sequence element will contain not only new information about nodes reachable at distance $t$ but also redundant information about nodes reachable at shorter distances through backtracking paths. This redundancy undermines the informativeness of the sequence and also its relative influence, as the set of nodes that have non-zero sensitivity becomes increasingly larger. 

To address this limitation of the adjacency powers, we propose to use an alternative sequence extraction mechanism based on non-backtracking walks. Instead of considering all paths of length $k$ at step $k$, we only consider paths of length $k$ which at no point immediately reverse direction. Thus, eliminating the two-cycle amplification that causes excessive redundant information. We refer to the non-backtracking (NBT) matrix as $\mB_{\G}$, where $(\mB_{\G})_{uv}$ contains the number of non-backtracking walks from $u$ to $v$. We remark that ~\citet{park_non-backtracking_2024} have also considered non-backtracking message-passing and its sensitivity, although solely by replacing them in the standard MPNN formulation. 

\begin{definition}[Non-Backtracking Matrix]
    \label{def:NBT}
    Let $\G = (\V, \E)$ be an undirected simple graph. The non-backtracking matrix $\mathbf B_\G \in \R^{n\times n}$ is defined by the recurrence
    \begin{align*}
        \mathbf B_\G^{(0)} &= \mathbf I\\ 
        \mathbf B_\G^{(1)} &= \mathbf A_\G\\ 
        \mathbf B_\G^{(2)} &= \mathbf A_\G^2 - \mathbf D_\G\\
        \mathbf B_\G^{(t+2)} &= \mathbf A_\G \mathbf B_\G^{(t+1)} - (\mathbf D_\G - \mathbf I)\mathbf B_\G^{(t)}
    \end{align*}
\end{definition}

While the computation is sequential, we can still precompute the sequence of $\mathbf{I}, \mathbf B_\G, \mathbf B_\G^2, ... \mathbf B_\G^{l}$ if necessary or directly multiply with the feature matrix to avoid large multiplications between adjacency matrices. The key benefit of non-backtracking walks over the regular adjacency powers lies in their improved relative influence for distant nodes by reducing the amount of redundant backtracking information.

\begin{restatable}[Relative Influence of NBT]{theorem}{influence}
    \label{thm:influence}
        Let $c<1$ and let $I^A_{v,k}(w)$ (resp. $I_{v,k}^B(w)$) denote the influence of $w$ on the $k$th sequence element of $v$ captured using the adjacency sequence $\mA$ (resp. non-backtracking $\mB$). There exists an infinite family of graphs $\mathcal{G}$ where $I_{v,k}^A(w)$ decays exponentially faster with distance $k$ compared to $I_{v,k}^B(w)$:
\begin{align*}
    \frac{I^A_{v,k}(w)}{I^B_{v,k}(w)}\leq c^{k-1},
\end{align*}
\end{restatable}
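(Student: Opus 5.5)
Since the statement is existential, I will construct an explicit family and compute both influences in closed form. Take complete $(\Delta)$-regular trees, or more simply the infinite-like family of $d$-regular trees truncated at depth $R\ge k$, indexed by $R$ (and possibly $d\ge 3$). Fix a root $v$ and a vertex $w$ at distance exactly $k$ from $v$. Because the extraction is linear, $\partial \mathbf s^{(k)}_v/\partial \mathbf x_u$ is the scalar $(\mA_\G^k)_{vu}$ (resp.\ $(\mB^{(k)}_\G)_{vu}$) times the identity. Hence $\mathbf e^T[\cdot]\mathbf e$ equals $d$ times that scalar, and the factor $d$ cancels in the ratio, giving
\[
I^A_{v,k}(w)=\frac{(\mA_\G^k)_{vw}}{\sum_u (\mA_\G^k)_{vu}},\qquad I^B_{v,k}(w)=\frac{(\mB^{(k)}_\G)_{vw}}{\sum_u (\mB^{(k)}_\G)_{vu}}.
\]

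\textbf{Numerators.} In a tree there is exactly one walk of length $k$ from $v$ to a vertex at distance $k$, and it is non-backtracking. So $(\mA_\G^k)_{vw}=(\mB^{(k)}_\G)_{vw}=1$. I first check that the recurrence in \Cref{def:NBT} indeed counts non-backtracking walks. This is the standard identity: extend a length-$(t+1)$ NBT walk by one edge, then subtract the $\deg-1$ extensions that backtrack. The $t=1$ step subtracts $\mathbf D_\G$ rather than $\mathbf D_\G-\mathbf I$, which matches the fact that at the first step there is no previous edge. The ratio therefore reduces to
\[
\frac{I^A_{v,k}(w)}{I^B_{v,k}(w)}=\frac{\sum_u (\mB^{(k)}_\G)_{vu}}{\sum_u (\mA^k_\G)_{vu}},
\]
i.e.\ the total number of NBT walks of length $k$ from $v$ divided by the total number of walks of length $k$ from $v$.

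\textbf{Denominators.} In a $d$-regular tree with leaves beyond depth $k$ (so $R\ge k$ and $v$ is the root, all vertices within distance $k$ having degree $d$, or at least the interior ones), the NBT count is $d(d-1)^{k-1}$ and the total walk count is $d^k$. The ratio is then $(1-1/d)^{k-1}$, so $c=(d-1)/d$ works for a fixed $d$.

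To get an arbitrary prescribed $c<1$, I choose $d\ge 1/(1-c)$, so that $(d-1)/d\le c$. The infinite family is then the trees of every depth $R\ge k$ with that $d$ (or all $d$ beyond the threshold). I must ensure the walks of length $k$ from the root never hit leaves, where the degree drops. Taking $R>k$ suffices: any walk of length $k$ from the root stays within depth $k<R$, so all visited vertices before the last step have degree $d$. The exact counts $d^k$ and $d(d-1)^{k-1}$ then hold, with the root having $d$ children and interior vertices having $d-1$ children plus a parent.

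\textbf{Main obstacle.} This is interpretive rather than technical: deciding that $I$ is computed from these linear extraction sequences so that the Jacobians are scalar multiples of the identity, and handling whether $k$ is fixed while the family varies. If the statement is meant for $k$ growing within a single graph, I would instead use the $d$-regular tree with $R\to\infty$ (or a high-girth $d$-regular graph with girth $>2k$), where the same local counts apply. If a normalized adjacency is intended instead, I would recheck that the normalizations cancel in the ratio, which holds on regular graphs.
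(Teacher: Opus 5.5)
Your core computation is the same as the paper's. The appendix lemma also takes a vertex $v$ whose radius-$k$ neighborhood is a $d$-regular tree, uses the unique geodesic to make both numerators equal to $1$, and compares the denominators $d^k$ and $d(d-1)^{k-1}$ to get the ratio $((d-1)/d)^{k-1}$. That part of your proposal is correct, including the truncation at depth $R>k$ and the observation that normalization cancels on regular graphs.

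The step that fails is the passage to an arbitrary prescribed $c$, where the inequality is reversed. Since $(d-1)/d=1-1/d$ increases with $d$, choosing $d\ge 1/(1-c)$ gives $(d-1)/d\ge c$, not $\le c$. For example, $c=0.6$ forces $d\ge 3$, and $(2/3)^{k-1}>0.6^{k-1}$ for every $k\ge 2$. Taking ``all $d$ beyond the threshold'' only makes this worse. The correct requirement is $2\le d\le 1/(1-c)$, which can be met only when $c\ge 1/2$; the case $d=2$ is the path, giving $2/2^k=(1/2)^{k-1}$. So no family of regular trees delivers the bound for $c<1/2$.

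The paper's proof stops at the lemma. In effect it proves the theorem with $c=(d-1)/d$ for a fixed $d$, which is the existential reading in $c$. Under that reading you should simply delete the faulty step; $d=2$ already covers every $c\in[1/2,1)$.

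If you want the universal reading, you need a non-regular family. For the unnormalized powers $\mathbf A_\G^k$ you work with, one option is a path $v=p_0,p_1,\dots,p_K$ with $M\ge c^{-2}$ pendant leaves attached to $v$, and $w=p_k$. For $2\le k\le K$ the geodesic is the only non-backtracking walk of length $k$ from $v$, since steps into leaves are dead ends, so $I^B_{v,k}(w)=1$. Meanwhile, bouncing between $v$ and its leaves gives at least $M^{\lfloor k/2\rfloor}$ walks of length $k$. The ratio is therefore at most $M^{-\lfloor k/2\rfloor}\le c^{2\lfloor k/2\rfloor}\le c^{k-1}$, and it equals $1=c^0$ at $k=1$. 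This construction relies on unnormalized counts: with $\An$ the degree normalization absorbs the extra leaf walks, and the construction does not improve on the rate $1/2$.
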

See Appendix~\ref{sec:seq_extraction} for the proof.

We also validate these insights empirically by testing the different sequence methods on the ECHO eccentricity task. In Figure \ref{fig:eccentricity}, we train LGSM models with different sequence lengths and the considered sequence types. While the performance increases for all sequence types the longer the sequences are, using naive adjacency powers leads to numerical instabilities for longer sequences. Moreover, the $\mB_\G$ exhibit better performance compared to the adjacency matrices across all considered sequence lengths.

\begin{figure}[ht!]
  \vskip 0.2in
  \begin{center}
    \centerline{\includegraphics[width=\columnwidth]{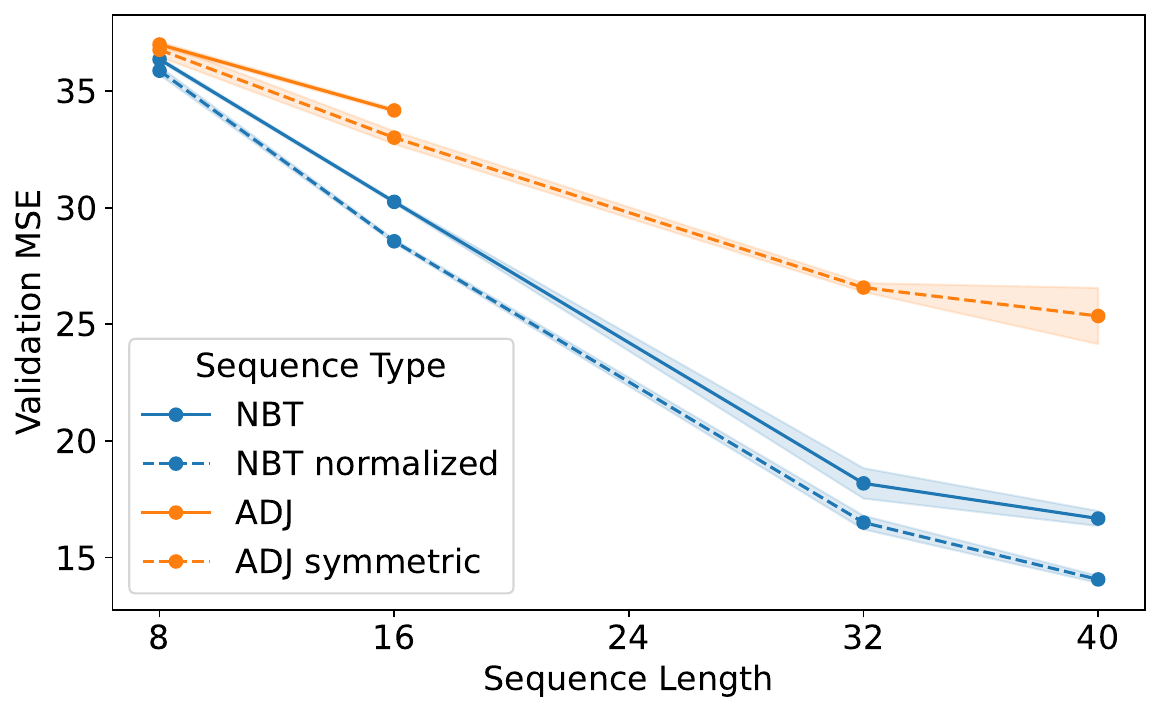}}
    \caption{
      Comparison of sequence extraction mechanisms on the ECHO eccentricity task with graphs of diameter up to 40. We evaluate non-backtracking (NBT) and adjacency powers, both original and normalized. Each datapoint represent the mean of three trained models of a specific sequence length using four LGSM blocks. Therefore the number of trainable parameters is the same for all sequence lengths. We observe that the performance consistently improves for longer sequences. Moreover, non-backtracking sequences are more informative and lead to consistently better performance across all considered sequence lengths. The unnormalized adjacency sequence causes numerical instabilities, leading to NaN values in computation.
    }
    \label{fig:eccentricity}
  \end{center}
\end{figure}

{\color{orange}

}

\section{Empirical Evaluation}

\begin{table*}[tb]
  \caption{Performance on synthetic graph property prediction tasks from ECHO-Synth, which are designed to be challenging for long-range information exchange. Results show MSE and MAE for diameter (DIAM), eccentricity (ECC), and single-source shortest path (SSSP) prediction. LGSM demonstrates strong performance across all datasets, particularly on the eccentricity prediction.}
  \label{tab:test_results_all}
  \begin{center}
    \begin{small}
      \begin{sc}
        \begin{tabular}{lrrrrrr}
\toprule
\multicolumn{1}{c}{Dataset} & \multicolumn{2}{c}{diam} & \multicolumn{2}{c}{ecc} & \multicolumn{2}{c}{sssp} \\
\cmidrule(lr){2-3}
\cmidrule(lr){4-5}
\cmidrule(lr){6-7}
 & MSE $\downarrow$ & MAE $\downarrow$ & MSE $\downarrow$ & MAE $\downarrow$ & MSE $\downarrow$ & MAE $\downarrow$ \\
\midrule
A-DGN & 4.818 \tiny{$\pm$ 0.108} & 1.151 \tiny{$\pm$ 0.038} & 35.967 \tiny{$\pm$ 0.492} & 4.981 \tiny{$\pm$ 0.037} & 4.425 \tiny{$\pm$ 0.879} & 1.176 \tiny{$\pm$ 0.140} \\
DRew & 3.756 \tiny{$\pm$ 0.170} & 1.243 \tiny{$\pm$ 0.047} & 32.247 \tiny{$\pm$ 0.148} & 4.651 \tiny{$\pm$ 0.020} & 6.589 \tiny{$\pm$ 0.015} & 1.279 \tiny{$\pm$ 0.011} \\
GCN & 22.872 \tiny{$\pm$ 2.766} & 3.832 \tiny{$\pm$ 0.262} & 39.706 \tiny{$\pm$ 0.460} & 5.233 \tiny{$\pm$ 0.034} & 9.743 \tiny{$\pm$ 0.757} & 2.102 \tiny{$\pm$ 0.094} \\
GCNII & 9.696 \tiny{$\pm$ 0.568} & 2.005 \tiny{$\pm$ 0.093} & 39.911 \tiny{$\pm$ 0.518} & 5.241 \tiny{$\pm$ 0.030} & 10.369 \tiny{$\pm$ 3.575} & 2.128 \tiny{$\pm$ 0.429} \\
GIN & 7.238 \tiny{$\pm$ 1.153} & 1.630 \tiny{$\pm$ 0.161} & 34.454 \tiny{$\pm$ 1.201} & 4.869 \tiny{$\pm$ 0.092} & 11.868 \tiny{$\pm$ 2.689} & 2.234 \tiny{$\pm$ 0.271} \\
GPS & 10.454 \tiny{$\pm$ 0.610} & 2.160 \tiny{$\pm$ 0.098} & 33.346 \tiny{$\pm$ 0.226} & 4.758 \tiny{$\pm$ 0.021} & 1.255 \tiny{$\pm$ 0.113} & 0.472 \tiny{$\pm$ 0.050} \\
GRIT & 3.877 \tiny{$\pm$ 0.295} & 1.014 \tiny{$\pm$ 0.046} & 38.667 \tiny{$\pm$ 1.903} & 5.091 \tiny{$\pm$ 0.158} & 0.147 \tiny{$\pm$ 0.083} & 0.121 \tiny{$\pm$ 0.013} \\
GraphCON & 16.427 \tiny{$\pm$ 1.419} & 2.969 \tiny{$\pm$ 0.189} & 43.505 \tiny{$\pm$ 0.017} & 5.474 \tiny{$\pm$ 0.001} & 52.104 \tiny{$\pm$ 0.016} & 5.734 \tiny{$\pm$ 0.011} \\
PH-DGN & 6.699 \tiny{$\pm$ 2.728} & 1.627 \tiny{$\pm$ 0.398} & 37.510 \tiny{$\pm$ 1.416} & 5.068 \tiny{$\pm$ 0.126} & 4.656 \tiny{$\pm$ 3.013} & 1.323 \tiny{$\pm$ 0.485} \\
SWAN & 4.950 \tiny{$\pm$ 0.265} & 1.121 \tiny{$\pm$ 0.070} & 34.208 \tiny{$\pm$ 0.578} & 4.840 \tiny{$\pm$ 0.045} & 2.905 \tiny{$\pm$ 1.556} & 0.896 \tiny{$\pm$ 0.232} \\
\midrule
LGSM (ours) & \textbf{3.089 \tiny{$\pm$ 0.389}} & \textbf{0.859 \tiny{$\pm$ 0.044}} & \textbf{13.549 \tiny{$\pm$ 0.539}} & \textbf{2.430 \tiny{$\pm$ 0.070}} & \textbf{0.040 \tiny{$\pm$ 0.008}} & \textbf{0.076 \tiny{$\pm$ 0.009}} \\
\bottomrule
\end{tabular}
      \end{sc}
    \end{small}
  \end{center}
  \vskip -0.1in
\end{table*}

\begin{table}[tb]
  \caption{Performance on molecular property prediction tasks from ECHO-Chem. Results show MSE and MAE for atomic partial charge (CHARGE) and total molecular energy (ENERGY) prediction. LGSM exhibits strong performance on both tasks, demonstrating its applicability beyond synthetic settings for real-world prediction which require capturing non-local atomic interactions.}
  \label{tab:test_results_chem}
  \begin{center}
    \begin{small}
      \begin{sc}
        \resizebox{1.0\columnwidth}{!}{
            \begin{tabular}{lrrrr}
\toprule
\multicolumn{1}{c}{Dataset} & \multicolumn{2}{c}{charge} & \multicolumn{2}{c}{energy} \\
\cmidrule(lr){2-3}
\cmidrule(lr){4-5}
 & MSE $\downarrow$$\,\times 10^{4}$ & MAE $\downarrow$$\,\times 10^{3}$ & MSE $\downarrow$$\,\times 10^{-3}$ & MAE $\downarrow$ \\
\midrule
A-DGN & 1.456 \tiny{$\pm$ 0.032} & 6.543 \tiny{$\pm$ 0.146} & 1.415 \tiny{$\pm$ 0.799} & 12.486 \tiny{$\pm$ 1.621} \\
DRew & 3.669 \tiny{$\pm$ 0.459} & 9.086 \tiny{$\pm$ 0.473} & 1.281 \tiny{$\pm$ 0.733} & 11.325 \tiny{$\pm$ 2.394} \\
GCN & 3.126 \tiny{$\pm$ 0.263} & 8.421 \tiny{$\pm$ 0.512} & 4.561 \tiny{$\pm$ 0.176} & 28.112 \tiny{$\pm$ 1.239} \\
GCNII & 3.490 \tiny{$\pm$ 0.147} & 8.829 \tiny{$\pm$ 0.021} & 1.560 \tiny{$\pm$ 0.653} & 13.235 \tiny{$\pm$ 2.630} \\
GIN & 5.750 \tiny{$\pm$ 0.239} & 10.784 \tiny{$\pm$ 0.059} & 12.215 \tiny{$\pm$ 2.878} & 47.851 \tiny{$\pm$ 10.154} \\
GPS & 1.620 \tiny{$\pm$ 0.065} & 6.182 \tiny{$\pm$ 0.219} & 0.180 \tiny{$\pm$ 0.045} & 5.257 \tiny{$\pm$ 0.842} \\
GRIT & 1.765 \tiny{$\pm$ 0.071} & 7.134 \tiny{$\pm$ 6.090} & 0.512 \tiny{$\pm$ 0.149} & 25.508 \tiny{$\pm$ 2.507} \\
GraphCON & 13.250 \tiny{$\pm$ 0.265} & 19.629 \tiny{$\pm$ 0.195} & 0.975 \tiny{$\pm$ 0.242} & 14.295 \tiny{$\pm$ 0.807} \\
PH-DGN & 2.562 \tiny{$\pm$ 0.144} & 7.915 \tiny{$\pm$ 0.269} & 1.359 \tiny{$\pm$ 0.408} & 16.080 \tiny{$\pm$ 1.123} \\
SWAN & \textbf{1.251 \tiny{$\pm$ 0.029}} & 6.109 \tiny{$\pm$ 0.103} & 2.652 \tiny{$\pm$ 2.257} & 12.629 \tiny{$\pm$ 1.157} \\
\midrule
LGSM (ours) & 1.411 \tiny{$\pm$ 0.038} & \textbf{5.556 \tiny{$\pm$ 0.068}} & \textbf{0.152 \tiny{$\pm$ 0.022}} & \textbf{3.272 \tiny{$\pm$ 0.186}} \\
\bottomrule
\end{tabular}
        }
      \end{sc}
    \end{small}
  \end{center}
  \vskip -0.1in
\end{table}

After investigating the design and theoretical properties of LGSM, we evaluated its empirical performance. We consider the recently proposed ECHO benchmark \citep{miglior_can_2025}, LRIM Graph Benchmark \citep{mathys_lrim_2025} as well as the Peptides tasks from the LRGB collection \citep{dwivedi_long_2022}. On Peptides we find that LGSM performs similarly to standard message-passing, meaning that the impact of processing and information depth seems negligible. This is perhaps not as surprising considering that recent techniques within two layers have achieved strong performances \citep{adamczyk_molecular_2025} leveraging inductive bias. We refer to Appendix \ref{app:experimental details} for more details. Therefore, to properly evaluate if LGSM can provide sufficient information and processing depth, we focus on more recent, specialized benchmarks, which were designed to evaluate these capabilities: ECHO-Synth, ECHO-Chem, and LRIM. ECHO-Synth evaluates models on synthetic graph property prediction tasks consisting of predicting graph diameter (DIAM), node eccentricity (ECC), and single-source shortest paths (SSSP). Note that these tasks should require information depth, especially for graph families which exhibit a high graph diameter. As such the synthetic datasets provide an ideal testbeds to evaluate LGSM's  information depth capabilities. On the other hand, ECHO-Chem provides real-world molecular property prediction tasks, including atomic partial charge prediction (ECHO-Charge) and total molecular energy prediction (ECHO-Energy). These tasks require models to capture non-local atomic interactions. This is particularly interesting because we can evaluate whether our desired model properties and capabilities apply beyond synthetic settings. Moreover, the baselines provided encompass a wide range of common graph learning benchmarks, including classic MPNNs, more sophisticated multi-hop and differential equation inspired message-passing as well as graph transformers. For more details on the details of our training setup and dataset details, we refer to Appendix \ref{app:experimental details}.

\begin{figure*}[t]
    \centering
    \includegraphics[width=0.6\textwidth]{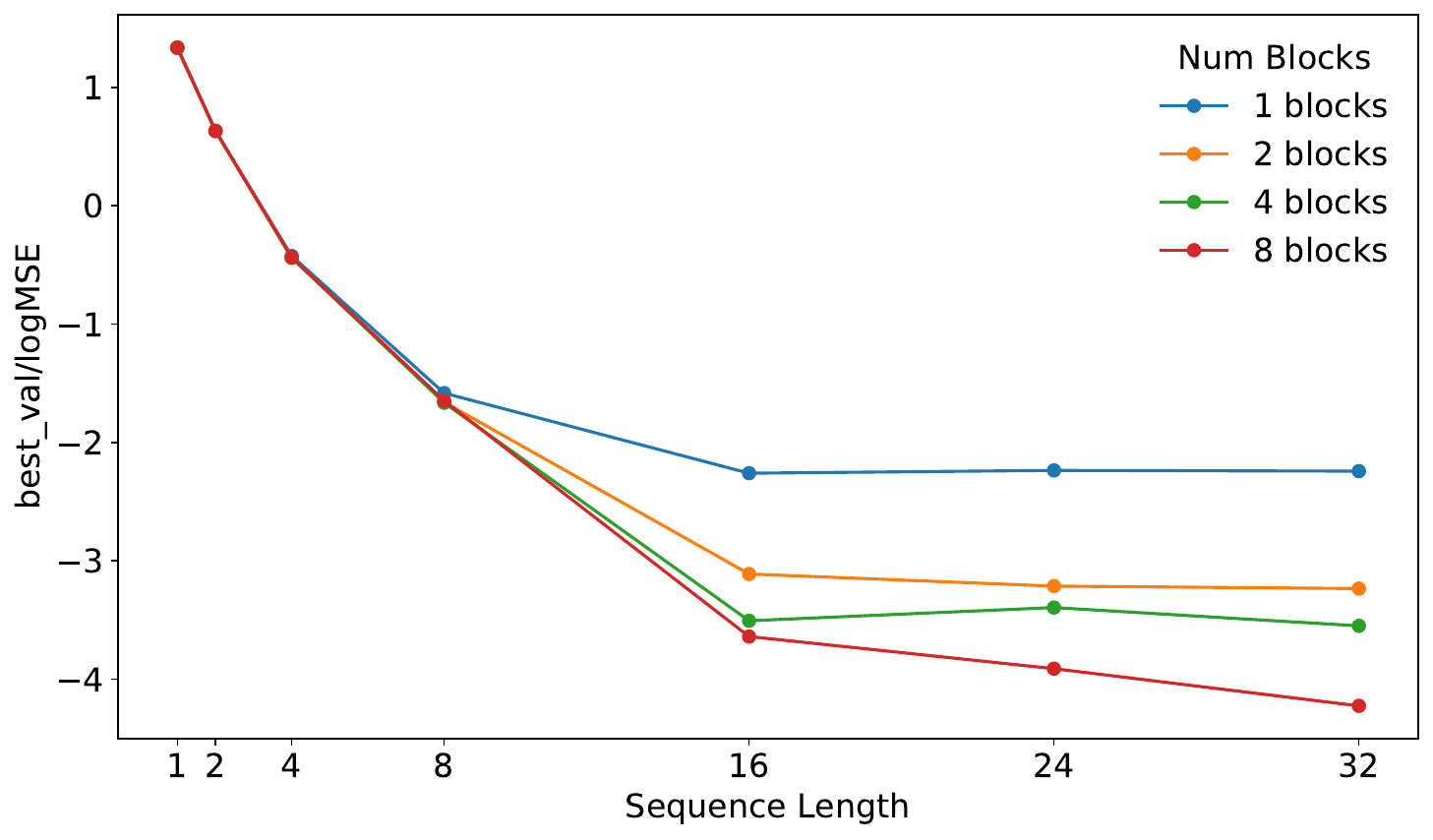}
    \caption{Reporting validation logMSE (lower is better) on the LRIM-16-hard dataset. We ablate the impact of scaling either information depth or processing depth with LGSM. We validate that both increasing the sequence length and the model depth has a positive effect. This underscores the ability of the LGSM architecture to incorporate both axis in a principled and effective manner.}
    \label{fig:lrim_main}
\end{figure*}

Following the recommended setup, we train all our models to minimize logMSE during training and report both the MSE and MAE score metric in Table \ref{tab:test_results_chem} and  Table \ref{tab:test_results_all}, respectively. We report the baseline scores directly from the ECHO benchmark, all our model results are averaged over three random seeds. Note that the models are not trained to minimize MAE directly and only the MSE metric is optimized during training. LGSM surpasses the baselines across most tasks in the ECHO benchmark. The model demonstrates particularly strong performance on the synthetic graph property prediction tasks. In the ablation presented in Figure \ref{fig:eccentricity}, we evaluate the performance of the model with respect to the sequence length. The processing depth and, as a consequence, the parameter budget stay fixed. Nevertheless, we observe that, with increasing sequence length, performance consistently improves. This confirms that the ability to properly handle large information depths through sequence modeling is one of the key elements for the performance of LGSM on these datasets. In addition, the framework achieves good performance on real-world molecular property prediction tasks. Moreover, we evaluate on the LRIM-16-hard dataset, which is a dataset specifically designed to contain provable long-range interactions which require information gathering throughout the graph. In Figure \ref{fig:lrim_main} we observe that when we ablate over the LGSM configurations both the sequence length (information) and the number of blocks (processing) are indeed beneficial for the task, especially since for longer sequences the increase in pure information depth yields diminishing returns. We refer to the Appendix for the complete results on the LRIM Benchmark. This includes an in-depth ablation study on the individual components of LGSM and a discussion on empirical runtime.

\section{Limitations}

While LGSM is designed with computational efficiency in mind, such as linearization and parallel scan algorithms, sequence preprocessing, row or columnwise forward passes to balance resources, we do not systematically explore these computational trade-offs in this work. We also acknowledge that there are other factors which can impact the performance of graph architectures such as regularization or domain inductive-bias besides the information and processing depth. We do not claim our framework to be universally optimal, instead, we aim to provide a theoretically based and principled understanding of the design principles of message-passing-based architectures.

\section{Conclusion}

We introduce Linearized Graph Sequence Models, a framework that views message-passing based graph learning from the perspective of sequence modeling by explicitly decoupling two fundamental dimensions of computation: information depth and processing depth. Unlike standard message-passing networks where these dimensions advance simultaneously, LGSM linearizes propagation along the information axis, enabling information to flow across arbitrary graph distances while confining non-linear transformations to the processing dimension. This separation allows leveraging modern state-space model implementations for efficient parallel computation. Moreover, it enables proper information and gradient flow throughout and as a consequence reframes architectural choices in graph learning as questions about sequence extraction.
Our theoretical analysis fully characterizes the architecture and directly connects node sensitivity to the choice of sequence extraction. Based on this insight, we leverage non-backtracking walk sequences, which offer favorable properties, especially for long-range information propagation. Our empirical evaluation on the ECHO benchmark demonstrates that LGSM achieves strong performance on both synthetic tasks for graph property prediction and on real-world molecular tasks.
Our work demonstrates how the simplification of the architecture and the clear separation of information and processing depth can shift key architectural questions into input modeling choices, providing a principled way for integrating modern deep learning techniques into graph learning.

\bibliography{echo, zoter_cit}

@inproceedings{kipf_semi-supervised_2017,
	title = {Semi-{Supervised} {Classification} with {Graph} {Convolutional} {Networks}},
	url = {https://openreview.net/forum?id=SJU4ayYgl},
	language = {en},
	urldate = {2026-01-27},
	author = {Kipf, Thomas N. and Welling, Max},
	month = feb,
	year = {2017},
}

@inproceedings{chen_simple_2020,
	title = {Simple and {Deep} {Graph} {Convolutional} {Networks}},
	issn = {2640-3498},
	url = {https://proceedings.mlr.press/v119/chen20v.html},
	language = {en},
	urldate = {2026-01-27},
	booktitle = {Proceedings of the 37th {International} {Conference} on {Machine} {Learning}},
	publisher = {PMLR},
	author = {Chen, Ming and Wei, Zhewei and Huang, Zengfeng and Ding, Bolin and Li, Yaliang},
	month = nov,
	year = {2020},
	pages = {1725--1735},
}

@inproceedings{gutteridge_drew_2023,
	address = {Honolulu, Hawaii, USA},
	series = {{ICML}'23},
	title = {{DRew}: dynamically rewired message passing with delay},
	volume = {202},
	shorttitle = {{DRew}},
	urldate = {2026-01-27},
	booktitle = {Proceedings of the 40th {International} {Conference} on {Machine} {Learning}},
	publisher = {JMLR.org},
	author = {Gutteridge, Benjamin and Dong, Xiaowen and Bronstein, Michael and Di Giovanni, Francesco},
	month = jul,
	year = {2023},
	pages = {12252--12267},
}

@article{gravina_oversquashing_2025,
	title = {On {Oversquashing} in {Graph} {Neural} {Networks} {Through} the {Lens} of {Dynamical} {Systems}},
	volume = {39},
	copyright = {Copyright (c) 2025 Association for the Advancement of Artificial Intelligence},
	issn = {2374-3468},
	url = {https://ojs.aaai.org/index.php/AAAI/article/view/33858},
	doi = {10.1609/aaai.v39i16.33858},
	language = {en},
	number = {16},
	urldate = {2026-01-27},
	journal = {Proceedings of the AAAI Conference on Artificial Intelligence},
	author = {Gravina, Alessio and Eliasof, Moshe and Gallicchio, Claudio and Bacciu, Davide and Schönlieb, Carola-Bibiane},
	month = apr,
	year = {2025},
	pages = {16906--16914},
}

@inproceedings{heilig_port-hamiltonian_2024,
	title = {Port-{Hamiltonian} {Architectural} {Bias} for {Long}-{Range} {Propagation} in {Deep} {Graph} {Networks}},
	url = {https://openreview.net/forum?id=03EkqSCKuO},
	language = {en},
	urldate = {2026-01-27},
	author = {Heilig, Simon and Gravina, Alessio and Trenta, Alessandro and Gallicchio, Claudio and Bacciu, Davide},
	month = oct,
	year = {2024},
}

@inproceedings{gravina_anti-symmetric_2022,
	title = {Anti-{Symmetric} {DGN}: a stable architecture for {Deep} {Graph} {Networks}},
	shorttitle = {Anti-{Symmetric} {DGN}},
	url = {https://openreview.net/forum?id=J3Y7cgZOOS},
	language = {en},
	urldate = {2026-01-27},
	author = {Gravina, Alessio and Bacciu, Davide and Gallicchio, Claudio},
	month = sep,
	year = {2022},
}

@inproceedings{rusch_graph-coupled_2022,
	title = {Graph-{Coupled} {Oscillator} {Networks}},
	issn = {2640-3498},
	url = {https://proceedings.mlr.press/v162/rusch22a.html},
	language = {en},
	urldate = {2026-01-27},
	booktitle = {Proceedings of the 39th {International} {Conference} on {Machine} {Learning}},
	publisher = {PMLR},
	author = {Rusch, T. Konstantin and Chamberlain, Ben and Rowbottom, James and Mishra, Siddhartha and Bronstein, Michael},
	month = jun,
	year = {2022},
	pages = {18888--18909},
}

@inproceedings{arroyo_vanishing_2025,
	title = {On {Vanishing} {Gradients}, {Over}-{Smoothing}, and {Over}-{Squashing} in {GNNs}: {Bridging} {Recurrent} and {Graph} {Learning}},
	shorttitle = {On {Vanishing} {Gradients}, {Over}-{Smoothing}, and {Over}-{Squashing} in {GNNs}},
	url = {https://openreview.net/forum?id=N4cyRMuLyl&referrer=%5Bthe%20profile%20of%20Federico%20Barbero%5D(%2Fprofile%3Fid%3D~Federico_Barbero1)},
	language = {en},
	urldate = {2026-01-06},
	author = {Arroyo, Alvaro and Gravina, Alessio and Gutteridge, Benjamin and Barbero, Federico and Gallicchio, Claudio and Dong, Xiaowen and Bronstein, Michael M. and Vandergheynst, Pierre},
	month = oct,
	year = {2025},
}

@misc{ceni_message-passing_2025,
	title = {Message-{Passing} {State}-{Space} {Models}: {Improving} {Graph} {Learning} with {Modern} {Sequence} {Modeling}},
	shorttitle = {Message-{Passing} {State}-{Space} {Models}},
	url = {http://arxiv.org/abs/2505.18728},
	doi = {10.48550/arXiv.2505.18728},
	urldate = {2026-01-06},
	publisher = {arXiv},
	author = {Ceni, Andrea and Gravina, Alessio and Gallicchio, Claudio and Bacciu, Davide and Schonlieb, Carola-Bibiane and Eliasof, Moshe},
	month = may,
	year = {2025},
	note = {arXiv:2505.18728 [cs]},
}

@inproceedings{eliasof_graph_2025,
	title = {Graph {Adaptive} {Autoregressive} {Moving} {Average} {Models}},
	issn = {2640-3498},
	url = {https://proceedings.mlr.press/v267/eliasof25a.html},
	language = {en},
	urldate = {2026-01-06},
	booktitle = {Proceedings of the 42nd {International} {Conference} on {Machine} {Learning}},
	publisher = {PMLR},
	author = {Eliasof, Moshe and Gravina, Alessio and Ceni, Andrea and Gallicchio, Claudio and Bacciu, Davide and Schönlieb, Carola-Bibiane},
	month = oct,
	year = {2025},
	pages = {15232--15265},
}

@inproceedings{ding_recurrent_2024,
	title = {Recurrent {Distance} {Filtering} for {Graph} {Representation} {Learning}},
	issn = {2640-3498},
	url = {https://proceedings.mlr.press/v235/ding24d.html},
	language = {en},
	urldate = {2026-01-06},
	booktitle = {Proceedings of the 41st {International} {Conference} on {Machine} {Learning}},
	publisher = {PMLR},
	author = {Ding, Yuhui and Orvieto, Antonio and He, Bobby and Hofmann, Thomas},
	month = jul,
	year = {2024},
	pages = {11002--11015},
}

@inproceedings{gilmer_neural_2017,
	address = {Sydney, NSW, Australia},
	series = {{ICML}'17},
	title = {Neural message passing for {Quantum} chemistry},
	url = {https://dl.acm.org/doi/10.5555/3305381.3305512},
	urldate = {2026-01-25},
	booktitle = {Proceedings of the 34th {International} {Conference} on {Machine} {Learning} - {Volume} 70},
	publisher = {JMLR.org},
	author = {Gilmer, Justin and Schoenholz, Samuel S. and Riley, Patrick F. and Vinyals, Oriol and Dahl, George E.},
	month = aug,
	year = {2017},
	pages = {1263--1272},
}

@inproceedings{alon_bottleneck_2020,
	title = {On the {Bottleneck} of {Graph} {Neural} {Networks} and its {Practical} {Implications}},
	url = {https://openreview.net/forum?id=i80OPhOCVH2},
	language = {en},
	urldate = {2026-01-25},
	author = {Alon, Uri and Yahav, Eran},
	month = oct,
	year = {2020},
}

@misc{arnaiz-rodriguez_oversmoothing_2025,
	title = {Oversmoothing, {Oversquashing}, {Heterophily}, {Long}-{Range}, and more: {Demystifying} {Common} {Beliefs} in {Graph} {Machine} {Learning}},
	shorttitle = {Oversmoothing, {Oversquashing}, {Heterophily}, {Long}-{Range}, and more},
	url = {http://arxiv.org/abs/2505.15547},
	doi = {10.48550/arXiv.2505.15547},
	urldate = {2026-01-25},
	publisher = {arXiv},
	author = {Arnaiz-Rodriguez, Adrian and Errica, Federico},
	month = jun,
	year = {2025},
	note = {arXiv:2505.15547 [cs]},
}

@inproceedings{di_giovanni_over-squashing_2023,
	address = {Honolulu, Hawaii, USA},
	series = {{ICML}'23},
	title = {On over-squashing in message passing neural networks: the impact of width, depth, and topology},
	volume = {202},
	shorttitle = {On over-squashing in message passing neural networks},
	urldate = {2026-01-25},
	booktitle = {Proceedings of the 40th {International} {Conference} on {Machine} {Learning}},
	publisher = {JMLR.org},
	author = {Di Giovanni, Francesco and Giusti, Lorenzo and Barbero, Federico and Luise, Giulia and Liò, Pietro and Bronstein, Michael},
	month = jul,
	year = {2023},
	pages = {7865--7885},
}

@inproceedings{vaswani_attention_2017,
	title = {Attention is {All} you {Need}},
	volume = {30},
	url = {https://papers.nips.cc/paper_files/paper/2017/hash/3f5ee243547dee91fbd053c1c4a845aa-Abstract.html},
	urldate = {2026-01-25},
	booktitle = {Advances in {Neural} {Information} {Processing} {Systems}},
	publisher = {Curran Associates, Inc.},
	author = {Vaswani, Ashish and Shazeer, Noam and Parmar, Niki and Uszkoreit, Jakob and Jones, Llion and Gomez, Aidan N and Kaiser, Lukasz and Polosukhin, Illia},
	year = {2017},
}

@inproceedings{dao_transformers_2024,
	address = {Vienna, Austria},
	series = {{ICML}'24},
	title = {Transformers are {SSMs}: generalized models and efficient algorithms through structured state space duality},
	volume = {235},
	shorttitle = {Transformers are {SSMs}},
	urldate = {2026-01-25},
	booktitle = {Proceedings of the 41st {International} {Conference} on {Machine} {Learning}},
	publisher = {JMLR.org},
	author = {Dao, Tri and Gu, Albert},
	month = jul,
	year = {2024},
	pages = {10041--10071},
}

@inproceedings{rampasek_recipe_2022,
	title = {Recipe for a {General}, {Powerful}, {Scalable} {Graph} {Transformer}},
	url = {https://openreview.net/forum?id=lMMaNf6oxKM},
	language = {en},
	urldate = {2026-01-25},
	author = {Rampasek, Ladislav and Galkin, Mikhail and Dwivedi, Vijay Prakash and Luu, Anh Tuan and Wolf, Guy and Beaini, Dominique},
	month = oct,
	year = {2022},
}

@inproceedings{ma_graph_2023,
	address = {Honolulu, Hawaii, USA},
	series = {{ICML}'23},
	title = {Graph inductive biases in transformers without message passing},
	volume = {202},
	urldate = {2026-01-25},
	booktitle = {Proceedings of the 40th {International} {Conference} on {Machine} {Learning}},
	publisher = {JMLR.org},
	author = {Ma, Liheng and Lin, Chen and Lim, Derek and Romero-Soriano, Adriana and Dokania, Puneet K. and Coates, Mark and Torr, Philip H.S. and Lim, Ser-Nam},
	month = jul,
	year = {2023},
	pages = {23321--23337},
}

@inproceedings{stoll_generalizable_2025,
	title = {Generalizable {Insights} for {Graph} {Transformers} in {Theory} and {Practice}},
	url = {https://openreview.net/forum?id=ROfYsQ2KNV&referrer=%5Bthe%20profile%20of%20Timo%20Stoll%5D(%2Fprofile%3Fid%3D~Timo_Stoll1)},
	language = {en},
	urldate = {2026-01-25},
	author = {Stoll, Timo and Müller, Luis and Morris, Christopher},
	month = oct,
	year = {2025},
}

@article{tonshoff_walking_2023,
	title = {Walking {Out} of the {Weisfeiler} {Leman} {Hierarchy}: {Graph} {Learning} {Beyond} {Message} {Passing}},
	issn = {2835-8856},
	shorttitle = {Walking {Out} of the {Weisfeiler} {Leman} {Hierarchy}},
	url = {https://openreview.net/forum?id=vgXnEyeWVY},
	language = {en},
	urldate = {2026-01-25},
	journal = {Transactions on Machine Learning Research},
	author = {Tönshoff, Jan and Ritzert, Martin and Wolf, Hinrikus and Grohe, Martin},
	month = apr,
	year = {2023},
}

@inproceedings{behrouz_graph_2024,
	address = {New York, NY, USA},
	series = {{KDD} '24},
	title = {Graph {Mamba}: {Towards} {Learning} on {Graphs} with {State} {Space} {Models}},
	isbn = {979-8-4007-0490-1},
	shorttitle = {Graph {Mamba}},
	url = {https://dl.acm.org/doi/10.1145/3637528.3672044},
	doi = {10.1145/3637528.3672044},
	urldate = {2026-01-25},
	booktitle = {Proceedings of the 30th {ACM} {SIGKDD} {Conference} on {Knowledge} {Discovery} and {Data} {Mining}},
	publisher = {Association for Computing Machinery},
	author = {Behrouz, Ali and Hashemi, Farnoosh},
	month = aug,
	year = {2024},
	pages = {119--130},
}

@inproceedings{chen_learning_2024,
	title = {Learning {Long} {Range} {Dependencies} on {Graphs} via {Random} {Walks}},
	url = {https://openreview.net/forum?id=kJ5H7oGT2M},
	language = {en},
	urldate = {2026-01-25},
	author = {Chen, Dexiong and Schulz, Till Hendrik and Borgwardt, Karsten},
	month = oct,
	year = {2024},
}

@inproceedings{kim_revisiting_2024,
	title = {Revisiting {Random} {Walks} for {Learning} on {Graphs}},
	url = {https://openreview.net/forum?id=SG1R2H3fa1},
	language = {en},
	urldate = {2026-01-25},
	author = {Kim, Jinwoo and Zaghen, Olga and Suleymanzade, Ayhan and Ryou, Youngmin and Hong, Seunghoon},
	month = oct,
	year = {2024},
}

@inproceedings{wang_non-convolutional_2024,
	title = {Non-convolutional graph neural networks.},
	url = {https://openreview.net/forum?id=JDAQwysFOc},
	language = {en},
	urldate = {2026-01-25},
	author = {Wang, Yuanqing and Cho, Kyunghyun},
	month = nov,
	year = {2024},
}

@article{huang_what_2024,
	title = {What {Can} {We} {Learn} from {State} {Space} {Models} for {Machine} {Learning} on {Graphs}?},
	url = {https://openreview.net/forum?id=xAM9VaXZnY},
	language = {en},
	urldate = {2026-01-29},
	author = {Huang, Yinan and Miao, Siqi and Li, Pan},
	month = oct,
	year = {2024},
}

@inproceedings{grover_node2vec_2016,
	address = {New York, NY, USA},
	series = {{KDD} '16},
	title = {node2vec: {Scalable} {Feature} {Learning} for {Networks}},
	isbn = {978-1-4503-4232-2},
	shorttitle = {node2vec},
	url = {https://dl.acm.org/doi/10.1145/2939672.2939754},
	doi = {10.1145/2939672.2939754},
	urldate = {2026-01-29},
	booktitle = {Proceedings of the 22nd {ACM} {SIGKDD} {International} {Conference} on {Knowledge} {Discovery} and {Data} {Mining}},
	publisher = {Association for Computing Machinery},
	author = {Grover, Aditya and Leskovec, Jure},
	month = aug,
	year = {2016},
	pages = {855--864},
}

@inproceedings{perozzi_deepwalk_2014,
	address = {New York, NY, USA},
	series = {{KDD} '14},
	title = {{DeepWalk}: online learning of social representations},
	isbn = {978-1-4503-2956-9},
	shorttitle = {{DeepWalk}},
	url = {https://dl.acm.org/doi/10.1145/2623330.2623732},
	doi = {10.1145/2623330.2623732},
	urldate = {2026-01-29},
	booktitle = {Proceedings of the 20th {ACM} {SIGKDD} international conference on {Knowledge} discovery and data mining},
	publisher = {Association for Computing Machinery},
	author = {Perozzi, Bryan and Al-Rfou, Rami and Skiena, Steven},
	month = aug,
	year = {2014},
	pages = {701--710},
}

@inproceedings{arnaiz-rodriguez_diffwire_2022,
	title = {{DiffWire}: {Inductive} {Graph} {Rewiring} via the {Lovász} {Bound}},
	shorttitle = {{DiffWire}},
	url = {https://proceedings.mlr.press/v198/arnaiz-rodri-guez22a.html},
	language = {en},
	urldate = {2026-01-29},
	booktitle = {Proceedings of the {First} {Learning} on {Graphs} {Conference}},
	publisher = {PMLR},
	author = {Arnaiz-Rodrıiguez, Adrian and Begga, Ahmed and Escolano, Francisco and Oliver, Nuria M.},
	month = dec,
	year = {2022},
	note = {ISSN: 2640-3498},
	pages = {15:1--15:27},
}

@misc{grotschla_benchmarking_2026,
	title = {Benchmarking {Positional} {Encodings} for {GNNs} and {Graph} {Transformers}},
	url = {http://arxiv.org/abs/2411.12732},
	doi = {10.1145/3770854.3785701},
	urldate = {2026-01-29},
	author = {Grötschla, Florian and Xie, Jiaqing and Wattenhofer, Roger},
	month = jan,
	year = {2026},
	note = {arXiv:2411.12732 [cs]},
}

@inproceedings{wu_simplifying_2019,
	title = {Simplifying {Graph} {Convolutional} {Networks}},
	url = {https://proceedings.mlr.press/v97/wu19e.html},
	language = {en},
	urldate = {2026-01-29},
	booktitle = {Proceedings of the 36th {International} {Conference} on {Machine} {Learning}},
	publisher = {PMLR},
	author = {Wu, Felix and Souza, Amauri and Zhang, Tianyi and Fifty, Christopher and Yu, Tao and Weinberger, Kilian},
	month = may,
	year = {2019},
	note = {ISSN: 2640-3498},
	pages = {6861--6871},
}

@inproceedings{cohen_convexified_2021,
	title = {Convexified {Graph} {Neural} {Networks} for {Distributed} {Control} in {Robotic} {Swarms}},
	volume = {3},
	url = {https://www.ijcai.org/proceedings/2021/318},
	doi = {10.24963/ijcai.2021/318},
	language = {en},
	urldate = {2026-01-29},
	author = {Cohen, Saar and Agmon, Noa},
	month = aug,
	year = {2021},
	note = {ISSN: 1045-0823},
	pages = {2307--2313},
}

@inproceedings{keriven_not_2022,
	address = {Red Hook, NY, USA},
	series = {{NIPS} '22},
	title = {Not too little, not too much: a theoretical analysis of graph (over)smoothing},
	isbn = {978-1-7138-7108-8},
	shorttitle = {Not too little, not too much},
	urldate = {2026-01-29},
	booktitle = {Proceedings of the 36th {International} {Conference} on {Neural} {Information} {Processing} {Systems}},
	publisher = {Curran Associates Inc.},
	author = {Keriven, Nicolas},
	month = nov,
	year = {2022},
	pages = {2268--2281},
}

@inproceedings{mathys_new_2025,
	title = {A {New} {Perspective} for {Graph} {Learning} {Architecture} {Design}: {Linearize} {Your} {Depth} {Away}},
	shorttitle = {A {New} {Perspective} for {Graph} {Learning} {Architecture} {Design}},
	url = {https://openreview.net/forum?id=lybxAx4Msq},
	language = {en},
	urldate = {2026-01-29},
	author = {Mathys, Joël and Wattenhofer, Roger},
	month = oct,
	year = {2025},
}

@article{scarselli_graph_2009,
	title = {The {Graph} {Neural} {Network} {Model}},
	volume = {20},
	issn = {1941-0093},
	url = {https://ieeexplore.ieee.org/document/4700287},
	doi = {10.1109/TNN.2008.2005605},
	number = {1},
	urldate = {2026-01-29},
	journal = {IEEE Transactions on Neural Networks},
	author = {Scarselli, Franco and Gori, Marco and Tsoi, Ah Chung and Hagenbuchner, Markus and Monfardini, Gabriele},
	month = jan,
	year = {2009},
	pages = {61--80},
}

@article{micheli_neural_2009,
	title = {Neural network for graphs: a contextual constructive approach},
	volume = {20},
	issn = {1045-9227},
	shorttitle = {Neural network for graphs},
	url = {https://doi.org/10.1109/TNN.2008.2010350},
	doi = {10.1109/TNN.2008.2010350},
	number = {3},
	urldate = {2026-01-29},
	journal = {Trans. Neur. Netw.},
	author = {Micheli, Alessio},
	month = mar,
	year = {2009},
	pages = {498--511},
}

@inproceedings{xu_representation_2018,
	title = {Representation {Learning} on {Graphs} with {Jumping} {Knowledge} {Networks}},
	url = {https://proceedings.mlr.press/v80/xu18c.html},
	language = {en},
	urldate = {2026-01-29},
	booktitle = {Proceedings of the 35th {International} {Conference} on {Machine} {Learning}},
	publisher = {PMLR},
	author = {Xu, Keyulu and Li, Chengtao and Tian, Yonglong and Sonobe, Tomohiro and Kawarabayashi, Ken-ichi and Jegelka, Stefanie},
	month = jul,
	year = {2018},
	note = {ISSN: 2640-3498},
	pages = {5453--5462},
}

@article{park_non-backtracking_2024,
	title = {Non-backtracking {Graph} {Neural} {Networks}},
	issn = {2835-8856},
	url = {https://openreview.net/forum?id=64HdQKnyTc},
	language = {en},
	urldate = {2026-01-29},
	journal = {Transactions on Machine Learning Research},
	author = {Park, Seonghyun and Ryu, Narae and Kim, Gahee and Woo, Dongyeop and Yun, Se-Young and Ahn, Sungsoo},
	month = jun,
	year = {2024},
}

@inproceedings{gu_mamba_2024,
	title = {Mamba: {Linear}-{Time} {Sequence} {Modeling} with {Selective} {State} {Spaces}},
	shorttitle = {Mamba},
	url = {https://openreview.net/forum?id=tEYskw1VY2},
	language = {en},
	urldate = {2026-01-29},
	author = {Gu, Albert and Dao, Tri},
	month = aug,
	year = {2024},
}

@inproceedings{xu_how_2018,
	title = {How {Powerful} are {Graph} {Neural} {Networks}?},
	url = {https://openreview.net/forum?id=ryGs6iA5Km},
	language = {en},
	urldate = {2026-01-29},
	author = {Xu*, Keyulu and Hu*, Weihua and Leskovec, Jure and Jegelka, Stefanie},
	month = sep,
	year = {2018},
}

@inproceedings{gu_efficiently_2021,
	title = {Efficiently {Modeling} {Long} {Sequences} with {Structured} {State} {Spaces}},
	url = {https://openreview.net/forum?id=uYLFoz1vlAC},
	language = {en},
	urldate = {2026-01-27},
	author = {Gu, Albert and Goel, Karan and Re, Christopher},
	month = oct,
	year = {2021},
}

@inproceedings{gu_hippo_2020,
	address = {Red Hook, NY, USA},
	series = {{NIPS} '20},
	title = {{HiPPO}: recurrent memory with optimal polynomial projections},
	isbn = {978-1-7138-2954-6},
	shorttitle = {{HiPPO}},
	url = {https://dl.acm.org/doi/10.5555/3495724.3495849},
	urldate = {2026-01-27},
	booktitle = {Proceedings of the 34th {International} {Conference} on {Neural} {Information} {Processing} {Systems}},
	publisher = {Curran Associates Inc.},
	author = {Gu, Albert and Dao, Tri and Ermon, Stefano and Rudra, Atri and Ré, Christopher},
	month = dec,
	year = {2020},
	pages = {1474--1487},
}

@inproceedings{dwivedi_long_2022,
	address = {Red Hook, NY, USA},
	series = {{NIPS} '22},
	title = {Long range graph benchmark},
	isbn = {978-1-7138-7108-8},
	urldate = {2026-05-06},
	booktitle = {Proceedings of the 36th {International} {Conference} on {Neural} {Information} {Processing} {Systems}},
	publisher = {Curran Associates Inc.},
	author = {Dwivedi, Vijay Prakash and Rampášek, Ladislav and Galkin, Mikhail and Parviz, Ali and Wolf, Guy and Luu, Anh Tuan and Beaini, Dominique},
	month = nov,
	year = {2022},
	pages = {22326--22340},
}

@inproceedings{mathys_lrim_2025,
	title = {{LRIM}: a {Physics}-{Based} {Benchmark} for {Provably} {Evaluating} {Long}-{Range} {Capabilities} in {Graph} {Learning}},
	shorttitle = {{LRIM}},
	url = {https://openreview.net/forum?id=IAZXEX1dVV},
	language = {en},
	urldate = {2026-05-06},
	author = {Mathys, Joël and Christiansen, Henrik and Errica, Federico and Maruyama, Takashi and Alesiani, Francesco},
	month = oct,
	year = {2025},
}

@article{miglior_can_2025,
	title = {Can {You} {Hear} {Me} {Now}? {A} {Benchmark} for {Long}-{Range} {Graph} {Propagation}},
	shorttitle = {Can {You} {Hear} {Me} {Now}?},
	url = {https://openreview.net/forum?id=PcKkRJdX4n},
	language = {en},
	urldate = {2026-05-06},
	author = {Miglior, Luca and Tolloso, Matteo and Gravina, Alessio and Bacciu, Davide},
	month = may,
	year = {2025},
}

@article{tonshoff_where_2024,
	title = {Where {Did} the {Gap} {Go}? {Reassessing} the {Long}-{Range} {Graph} {Benchmark}},
	issn = {2835-8856},
	shorttitle = {Where {Did} the {Gap} {Go}?},
	url = {https://openreview.net/forum?id=Nm0WX86sKv},
	language = {en},
	urldate = {2026-05-06},
	journal = {Transactions on Machine Learning Research},
	author = {Tönshoff, Jan and Ritzert, Martin and Rosenbluth, Eran and Grohe, Martin},
	month = jan,
	year = {2024},
}

@misc{adamczyk_molecular_2025,
	title = {Molecular {Fingerprints} {Are} {Strong} {Models} for {Peptide} {Function} {Prediction}},
	url = {http://arxiv.org/abs/2501.17901},
	doi = {10.48550/arXiv.2501.17901},
	urldate = {2026-05-06},
	publisher = {arXiv},
	author = {Adamczyk, Jakub and Ludynia, Piotr and Czech, Wojciech},
	month = oct,
	year = {2025},
	note = {arXiv:2501.17901 [q-bio]
version: 2},
}
\bibliographystyle{icml2026}

\newpage
\appendix
\onecolumn

\section{Extended Related Work}\label{app:related work}

\textit{Message-passing} With the establishment of message-passing \citep{micheli_neural_2009, scarselli_graph_2009} as the core component to incorporate the graph bias, several different architectures have been proposed \citep{xu_how_2018, gilmer_neural_2017}. Most of them consider message-passing on the adjacency or a normalized version of it \citep{kipf_semi-supervised_2017}. As such, all share the inherent mechanism that leads to loss of information due to repeated deep aggregations \citep{alon_bottleneck_2020, di_giovanni_over-squashing_2023, arnaiz-rodriguez_oversmoothing_2025}. This has motivated considering complementary mechanisms such as virtual compute structures \citet{arnaiz-rodriguez_diffwire_2022} or entirely different approaches, such as graph transformers \citet{ma_graph_2023, stoll_generalizable_2025}. However, in particular the latter often requires additional modeling of inductive bias by positional embeddings \citep{grotschla_benchmarking_2026} or re-incorporating in the message-passing \citep{rampasek_recipe_2022}. Alternatively, the connection to discretizing differential equations has led to insights for mitigating vanishing node sensitivity and proper optimization by restricting the weight spaces \citep{gravina_anti-symmetric_2022, gravina_oversquashing_2025, heilig_port-hamiltonian_2024, arroyo_vanishing_2025}. Finally, different linearization approaches have been considered for mathematical analysis \citep{keriven_not_2022}, ease of optimization \citep{cohen_convexified_2021} or efficiency considerations \citep{ceni_message-passing_2025, wu_simplifying_2019, mathys_new_2025}.

\textit{Non-backtracking} Walk based methods \citep{grover_node2vec_2016, perozzi_deepwalk_2014} for graph learning have seen a resurgence in recent years \citep{behrouz_graph_2024, chen_learning_2024, wang_non-convolutional_2024, kim_revisiting_2024}, due to leveraging powerful sequence processing architectures. Most of these make use of non-backtracking walks and further employ some kind of local memory to incorporate neighboring nodes and edges as sliding windows \citep{tonshoff_walking_2023}.
Moreover, modeling the same behavior for message-passing architectures by explicitly changing the aggregation to be edge state dependent has favorable properties, such as better sensitivity and spectral properties \citep{park_non-backtracking_2024}. 

\section{Detailed Architecture}

A detailed specification of the LGSM architecture is provided in \Cref{alg:detailed_architecture}.
The model operates on graph-structured data by first transforming the input graph features
$\mathbf X \in \R^{d \times n}$ and adjacency matrix $\mAg \in \R^{n\times n}$ into a latent 
sequence representation. Specifically, a sequence extractor ($\textsc{Seq}$) maps the inputs
$(\mathbf X, \mAg)$ to an initial sequence $\smash{\mathbf S^{(0)}_{\text{in}}, \ldots, \mathbf S^{(L-1)}_{\text{in}}}$.
This sequence is subsequently processed through a stack of $D$ blocks, each comprising three components: a State-Space Model 
($\textsc{SSM}$), a Feed-Forward Network ($\sigma_1$), and a Graph Mixing layer.

In the first step of each layer, the SSM processes the input sequence $\mathbf S^{(0)}_{\text{in}}, \ldots, \mathbf S^{(L-1)}_{\text{in}}$ to an output sequence. The output is stabilized via a residual 
connection and layer normalization ($\textsc{LayerNorm}$) to obtain $\smash{\mathbf S^{(0)}_{\text{SSM}}, \ldots, \mathbf S^{(L-1)}_{\text{SSM}}}$. Subsequently, the sequence are passed
through a element-wise non-linear FFN, followed by a second residual connection and normalization step
to produce $\smash{\mathbf S^{(0)}_{\text{FFN}}, \ldots, \mathbf S^{(L-1)}_{\text{FFN}}}$. Finally, the Graph Mixing layer updates the sequence elements with the graph-propagated features of the preceding element.
This combined representation is processed by a second non-linearity ($\sigma_2$), refined with 
a skip connection, and normalized.

\begin{algorithm}
    \begin{algorithmic}[1]
        \STATE \textbf{Input: } $\mathbf X \in \R^{d \times n}, \mathbf A_G \in \R^{n\times n}, L \in \N, D\in \N$
        \STATE \textbf{Output: } $\mathbf S_{\text{out}} \in \R^{d \times n}$
        \STATE $\triangleright$ Initial sequence extraction
        \STATE $\mathbf S_{\text{in}}^{(0)}, \ldots, \mathbf S_{\text{in}}^{(L-1)} \gets \textsc{Seq}(\mathbf X, \mathbf A_G)$
        \FOR{$d = 1,\ldots,D$}
        \STATE $\triangleright$ SSM Layer
        \STATE $\mathbf S_{\text{SSM}}^{(0)}, \ldots, \mathbf S_{\text{SSM}}^{(L-1)} \gets \textsc{SSM}(\mathbf S_{\text{in}}^{(0)}, \ldots, \mathbf S_{\text{in}}^{(L-1)})$
        \FOR{$\ell = 0, \ldots,L-1$}
        \STATE $\mathbf S_{\text{SSM}}^{(\ell)} \gets \textsc{LayerNorm}(\mathbf S^{(\ell)}_{\text{SSM}} + \mathbf S^{(\ell)}_{\text{in}})$
        \STATE $\triangleright$ FFN layer
        \STATE $\mathbf S^{(\ell)}_{\text{FFN}} \gets \sigma_1(\mS^{(\ell)}_{\text{SSM}})$
        \STATE $\mathbf S^{(\ell)}_{\text{FFN}} \gets \textsc{LayerNorm}(\mathbf S^{(\ell)}_{\text{FFN}} + \mS^{(\ell)}_{\text{SSM}})$
        \ENDFOR
        \STATE $\triangleright$ Graph mixing layer
        \STATE $\mathbf S_{\text{MIX}}^{(0)} \gets \mathbf S_{\text{FFN}}^{(0)}$
        \FOR{$\ell = 0 , \ldots, L-1$}
        \STATE $\mathbf S_{\text{MIX}}^{(\ell)} \gets \sigma_2(\mathbf S_{\text{FFN}}^{(\ell)} + \mathbf A_G\mathbf S_{\text{FFN}}^{(\ell-1)})$
        \STATE $\mS^{(\ell)}_{\text{MIX}} \gets \textsc{LayerNorm}(\mS^{(\ell)}_{\text{MIX}} + \mS^{(\ell)}_{\text{FFN}})$
        \ENDFOR
        \STATE $\mS^{(0)}_{\text{in}}, \ldots, \mS^{(L-1)}_{\text{in}}\gets \mS^{(0)}_{\text{MIX}}, \ldots, \mS^{(L-1)}_{\text{MIX}}$
        \ENDFOR
        \STATE $\triangleright$ Final graph decoder
        \STATE $\mS_{\text{out}} \gets \textsc{Dec}(\mathbf S_{\text{in}}^{(L-1)})$
        \STATE \textbf{return } $\mS_{\text{out}}$
    \end{algorithmic}
\caption{Pseudocode of LGSM Architecture}
\label{alg:detailed_architecture}
\end{algorithm}

\clearpage

\section{Theoretical Results}

Let $\a_t := \mathbf C \sum_{k=0}^t \mathbf A^k \mathbf B \mathbf S_{\text{in}}^{(t-k)}$ be the closed-form expression for $\mathbf Y^{(t)}$ in the SSM recurrence \eqref{eq:ssm_rec_1}-\eqref{eq:ssm_rec_2} with initial condition $\mathbf X^{(-1)} = 0$, which follows by induction on $t$:

\begin{align}
    \mathbf X^{(t)} &= \mathbf A\mathbf X^{(t-1)} + \mathbf B\mathbf S^{(t)}_{\text{in}} \label{eq:ssm_rec_1}\\ 
    \mathbf Y^{(t)} &= \mathbf C\mathbf X^{(t)} \label{eq:ssm_rec_2}
\end{align}

In particular, $\a_t = \mathbf Y^{(t)}$. We begin with a helper lemma that bounds the sensitivity of a single SSM block in the absence of graph mixing. This will serve as a building block for the subsequent proofs.

\begin{lemma}\label{lemma:sens_helper}
    Consider a $1$-block LGSM of sequence length $L$ with an SSM defined by the system matrices $(\mathbf A, \mathbf B, \mathbf C)$, input features $\x_v$ and output features $\mathbf{S}_{\text{out}}$. Suppose that the FFN $\sigma_{\text{SSM}}$
    is $\mu_{\text{SSM}}$-regular, and that $\mathbf A$ is normal with spectral radius $\rho(\mathbf A) \approx 1$. Then:
    \begin{align*}
        \left\|\frac{\partial \sigma_{\text{SSM}}(\a_L)}{\partial \x_v}\right\|
        \le 
        \gamma \sum_{k=0}^{L} \left\|\frac{\partial\mathbf S_{\text{in}}^{(k)}}{\partial \x_v}\right\|
    \end{align*}
    Where $\gamma := \mu_{\text{SSM}}\|\mathbf B\|\|\mathbf C\|$.
\end{lemma}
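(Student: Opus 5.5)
The plan is to combine the chain rule with the closed form $\a_L = \mathbf C\sum_{k=0}^L \mathbf A^k\mathbf B\,\mathbf S_{\text{in}}^{(L-k)}$ and then bound each factor by submultiplicativity of the spectral norm.

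First, we write $\sigma_{\text{SSM}}(\a_L)$ as a composition. Its Jacobian with respect to $\x_v$ equals $J_{\sigma}(\a_L)\,\partial\a_L/\partial\x_v$, where $J_\sigma$ denotes the Jacobian of the FFN. We interpret $\mu_{\text{SSM}}$-regularity as the statement that $\sigma_{\text{SSM}}$ is $\mu_{\text{SSM}}$-Lipschitz, i.e.\ $\|J_\sigma\|\le\mu_{\text{SSM}}$ everywhere, and we use exactly this. Hence
\[
\Bigl\|\tfrac{\partial \sigma_{\text{SSM}}(\a_L)}{\partial \x_v}\Bigr\|\le \mu_{\text{SSM}}\Bigl\|\tfrac{\partial \a_L}{\partial \x_v}\Bigr\| .
\]

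Second, since $\a_L$ is linear in the sequence elements, differentiation commutes with the sum:
\[
\frac{\partial \a_L}{\partial\x_v}=\sum_{k=0}^L \mathbf C\mathbf A^k\mathbf B\,\frac{\partial \mathbf S_{\text{in}}^{(L-k)}}{\partial \x_v}.
\]
Here the system matrices act on the feature dimension, so they factor out of the Jacobian; if the node dimension is treated as a batch, the Jacobian is block-structured and the same factoring holds block-wise. The triangle inequality and submultiplicativity then give
\[
\Bigl\|\tfrac{\partial \a_L}{\partial\x_v}\Bigr\|\le \|\mathbf C\|\,\|\mathbf B\|\sum_{k=0}^L\|\mathbf A^k\|\,\Bigl\|\tfrac{\partial \mathbf S_{\text{in}}^{(L-k)}}{\partial\x_v}\Bigr\|.
\]

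Third, we handle $\|\mathbf A^k\|$, which is where the normality hypothesis enters. For a normal matrix, the spectral norm equals the spectral radius, and $\mathbf A^k$ is also normal with eigenvalues $\lambda^k$. Therefore $\|\mathbf A^k\|=\rho(\mathbf A)^k$. With $\rho(\mathbf A)\approx 1$, more precisely $\rho(\mathbf A)\le 1$, this gives $\|\mathbf A^k\|\le 1$ for all $k\le L$. If $\rho$ is only approximately $1$, a factor $\max(1,\rho)^L$ would be absorbed into $\gamma$.

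Finally, reindexing $L-k\mapsto k$ yields the claim with $\gamma=\mu_{\text{SSM}}\|\mathbf B\|\|\mathbf C\|$.

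The main obstacle is this third step, controlling $\|\mathbf A^k\|$ uniformly in $k$. Without normality, $\|\mathbf A^k\|$ can grow transiently even when $\rho(\mathbf A)\le1$, so normality is used precisely to identify norm with spectral radius. The remaining steps are routine bookkeeping about which dimension the matrices act on.
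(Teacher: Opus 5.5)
Your proposal is correct and follows the paper's proof essentially step for step. The shared steps are: the chain rule with $\mu_{\text{SSM}}$-regularity bounding the FFN Jacobian, the closed form of $\a_L$ with $\mathbf A,\mathbf B,\mathbf C$ pulled out of the derivative, the triangle inequality and submultiplicativity, $\|\mathbf A^k\|=\rho(\mathbf A)^k$ by normality, and a final reindexing. Your remark that the stated constant $\gamma=\mu_{\text{SSM}}\|\mathbf B\|\|\mathbf C\|$ really requires $\rho(\mathbf A)\le 1$ (otherwise a factor $\max(1,\rho(\mathbf A))^L$ must enter $\gamma$) makes explicit what the paper's step ``$\rho(\mathbf A)\approx 1$'' silently assumes.
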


\begin{proof}
    By a direct computation:
    \begin{align*}
        \left\|\frac{\partial \sigma_{\text{SSM}}(\a_L)}{\partial \x_v}\right\| 
        &\le \left\|\frac{\partial \sigma_{\text{SSM}}(\a_L)}{\partial \a_L}\right\|\left\|\frac{\partial \a_L}{\partial \x_v}\right\|  && (\text{submultiplicativity of }\|\cdot\|)\\ 
        &\le \mu_{\text{SSM}}\left\|\frac{\partial \a_L}{\partial \x_v}\right\|  && (\mu_{\text{SSM}}\text{-regularity of }\sigma_{\text{SSM}})\\ 
        &\le \mu_{\text{SSM}}\left\|\frac{\partial}{\partial \x_v} \mathbf C\sum_{k=0}^L \mathbf A^k\mathbf B\mathbf S_{\text{in}}^{(L-k)}\right\| && (\text{def. }\a_L)\\ 
        &\le \mu_{\text{SSM}}\left\|\mathbf C\sum_{k=0}^L \mathbf A^k\mathbf B\frac{\partial\mathbf S_{\text{in}}^{(L-k)}}{\partial\x_v}\right\| && (\mathbf A, \mathbf B, \mathbf C \text{ independent of }\x_v)\\ 
        &\le \mu_{\text{SSM}}\|\mathbf C\|\sum_{k=0}^L \left\|\mathbf A^k\mathbf B\frac{\partial \mathbf S_{\text{in}}^{(L-k)}}{\partial \x_v}\right\| && (\text{submultiplicativity and triangle inequality})\\ 
        &\le \mu_{\text{SSM}}\|\mathbf C\|\sum_{k=0}^L \|\mathbf A^k\|\|\mathbf B\|\left\|\frac{\partial \mathbf S_{\text{in}}^{(L-k)}}{\partial \x_v}\right\| && (\text{submultiplicativity})\\ 
        &\le \mu_{\text{SSM}}\|\mathbf C\|\|\mathbf B\|\sum_{k=0}^L\rho(\mathbf A)^k\left\|\frac{\partial \mathbf S_{\text{in}}^{(L-k)}}{\partial \x_v}\right\| && (\mathbf A\text{ normal})\\ 
        &\le \mu_{\text{SSM}}\|\mathbf C\|\|\mathbf B\|\sum_{k=0}^L\left\|\frac{\partial \mathbf S_{\text{in}}^{(L-k)}}{\partial \x_v}\right\|&& (\rho(\mathbf A) \approx 1)\\ 
        &\le \gamma\sum_{k=0}^L\left\|\frac{\partial \mathbf S_{\text{in}}^{(k)}}{\partial \x_v}\right\| && (\text{re-indexing}, k \mapsto L-k)
    \end{align*}
    Which concludes this proof.
\end{proof}

\clearpage

We now extend this result to a full $1$-block LGSM by additionally incorporating the graph mixing layer.
We use this lemma to show the following theorems. Next we show:

\sensOneLayer*

\begin{proof}
Let $\beta_L := \sigma_\SSM(\a_L) + \mathbf A_\G\sigma_\SSM(\a_{L-1})$ denote the value of the LGSM output before the application of the 
FFN $\sigma_\MIX$, such that $\mathbf S_{\text{out}}^{(L)} = \sigma_\MIX(\b_L)$. Then:

\begin{align*}
    \left\|\frac{\partial\mathbf S_{\text{out}}^{(L)}}{\partial\x_v}\right\|
    &\le \left\|\frac{\partial \sigma_\MIX(\beta_L)}{\partial \beta_L}\frac{\partial \beta_L}{\partial \x_v}\right\| && (\text{def. $1$-block LGSM})\\ 
    &\le \left\|\frac{\partial \sigma_\MIX(\beta_L)}{\partial \beta_L}\right\|\left\|\frac{\partial \beta_L}{\partial \x_v}\right\| && (\text{submultiplicativity})\\ 
    &\le \mu_\MIX\left\|\frac{\partial \beta_L}{\partial \x_v}\right\| && (\mu_\MIX\text{-regularity of }\sigma_\MIX)\\ 
    &\le \mu_\MIX\left\|\frac{\partial \sigma_\SSM(\a_L)}{\partial \x_v} + \mathbf A_\G\frac{\partial\sigma_\SSM(\a_{L-1})}{\partial \x_v}\right\| && (\text{def. } \beta_L)\\ 
    &\le \mu_\MIX\left(\left\|\frac{\partial \sigma_\SSM(\a_L)}{\partial \x_v}\right\| + \|\mathbf A_\G\|\left\|\frac{\partial\sigma_\SSM(\a_{L-1})}{\partial \x_v}\right\|\right) && (\text{triangle ineq. and submult.})\\ 
    &\le \mu_\MIX\left(\gamma'\sum_{k=0}^L\left\|\frac{\partial \mathbf S_{\text{in}}^{(k)}}{\partial \x_v}\right\| + \|\mathbf A_\G\|\gamma'\sum_{k=0}^{L-1}\left\|\frac{\partial \mathbf S_{\text{in}}^{(k)}}{\partial \x_v}\right\|\right) && (\text{\Cref{lemma:sens_helper}})\\ 
    &\le \gamma\sum_{k=0}^L\left\|\frac{\partial \mathbf S_{\text{in}}^{(k)}}{\partial \x_v}\right\| + \gamma\|\mathbf A_\G\|\sum_{k=0}^{L-1}\left\|\frac{\partial \mathbf S_{\text{in}}^{(k)}}{\partial \x_v}\right\| && (\gamma := \mu_\MIX \gamma')
\end{align*}
\end{proof}

Before extending to a general $D$-block LGSM, we first analyze the sensitivity of a $D$-block LGSM without 
graph mixing layers. That is, a $D$-fold stack of alternating SSM layers and an FFN $\sigma_\MIX$.

\sensNoMix*

\begin{proof}
Define the shorthand

\begin{align*}
    G(L,D) := \left\|\frac{\partial \mathbf S_{\text{out}}^{(L,D)}}{\partial \x_v}\right\|
\end{align*}

By \Cref{lemma:sens_helper} and the identity $\mathbf S_{\text{in}}^{(k,D)} = \mathbf S_{\text{out}}^{(k,D-1)}$, 
for $D > 0$ we obtain the recurrence

\begin{align}
    G(L,D) \le \gamma \sum_{k=0}^L G(k,D-1) \label{eq:rec_1}
\end{align}

where $\gamma = \mu_\SSM \|\mathbf B\|\|\mathbf C\|$.

\paragraph{Base cases.} If $D = 0$ we apply no SSM blocks, thus we have $\mathbf S_{\text{out}}^{(L,0)} = \mathbf S_{\text{in}}^{(L,1)}$ and

\begin{align}
    G(L,0) = \left\|\frac{\partial \mathbf S_{\text{in}}^{(L,1)}}{\partial \x_v}\right\|
    \le \gamma^0 \sum_{k=0}^L \binom{L-k-1}{-1} \left\|\frac{\partial \mathbf S_{\text{in}}^{(k,1)}}{\partial \x_v}\right\| \label{eq:rec_11}
\end{align}

Let $L = 0$ and $D > 0$. By repeatedly applying \eqref{eq:rec_1} and then \eqref{eq:rec_11} we obtain:

\begin{align*}
    G(0,D) \le \gamma^D G(0,0) \le \gamma^D \sum_{k=0}^0 \binom{0-k-1}{-1} \left\|\frac{\partial \mathbf S_{\text{in}}^{(0,1)}}{\partial \x_v}\right\| =  \gamma^D \sum_{k=0}^0 \binom{0-k-1}{D-1} \left\|\frac{\partial \mathbf S_{\text{in}}^{(0,1)}}{\partial \x_v}\right\|
\end{align*}

\paragraph{Inductive step.} Let $D > 0, L > 0$, then:

\begin{align*}
    G(L,D) &\le \gamma \sum_{k=0}^L G(k,D-1) && \eqref{eq:rec_1}\\ 
    &\le \gamma \sum_{k=0}^L \gamma^{D-1} \sum_{j=0}^k \binom{k-j+D-2}{D-2} \left\|\frac{\partial \mathbf S_{\text{in}}^{(j,1)}}{\partial \x_v}\right\| && (\textbf{IH})\\ 
    &\le \gamma^D \sum_{0 \le j \le k \le L} \binom{k-j+D-2}{D-2} \left\|\frac{\partial \mathbf S_{\text{in}}^{(j,1)}}{\partial \x_v}\right\|\\ 
    &\le \gamma^D \sum_{j=0}^L  \left\|\frac{\partial \mathbf S_{\text{in}}^{(j,1)}}{\partial \x_v}\right\| \sum_{k = j}^L \binom{k-j+D-2}{D-2}\\ 
    &\le \gamma^D \sum_{j=0}^L  \left\|\frac{\partial \mathbf S_{\text{in}}^{(j,1)}}{\partial \x_v}\right\| \sum_{k = 0}^{L-j} \binom{k+D-2}{D-2}\\ 
    &\le \gamma^D \sum_{j=0}^L  \left\|\frac{\partial \mathbf S_{\text{in}}^{(j,1)}}{\partial \x_v}\right\| \binom{L-j+D-1}{D-1} && (\text{Hockey-Stick Lemma})\\ 
    &\le \gamma^D \sum_{j=0}^L  \binom{L-j+D-1}{D-1} \left\|\frac{\partial \mathbf S_{\text{in}}^{(j,1)}}{\partial \x_v}\right\|
\end{align*}

Establishing the claim.
\end{proof}

\clearpage

Finally we generalize the statement to general $D$-block LGSM.

\begin{restatable}[Local Sensitivity for entire LGSM]{theorem}{sens}
    \label{thm:sens}
    Consider a $D$-block LGSM of sequence length $L$, with an SSM defined by system matrices
    $(\mathbf A, \mathbf B, \mathbf C)$. Denote the input and output features by 
    $\mathbf X, \mathbf S_{\text{out}}^{(L,D)} \in \R^{n\times d}$, respectively, where 
    $\mathbf x_v = (\mathbf X)_{v,:}$, is the feature vector of vertex $v \in \V$.
    Suppose that the FFNs $\sigma_{\SSM}, \sigma_{\MIX}$ are 
    $\mu_\SSM$- and $\mu_\MIX$-regular, respectively, and that $\mathbf A$ is normal with 
    spectral radius $\rho(\mathbf A) \approx 1$. Then, for any vertex $v \in \V$:
    \begin{align*}
        \left\|\frac{\partial \mathbf S^{(L,D)}_{\text{out}}}{\partial \mathbf x_v}\right\| \le \gamma^D \sum_{k=0}^L \left\|\frac{\partial \mathbf S^{(L-k,1)}_{\text{in}}}{\partial \mathbf x_v}\right\|
        \sum_{m=0}^{\min\{k,D\}} \binom{k-m+D-1}{D-1} \binom{D}{m}\|\mAg\|^m
    \end{align*}
    where $\gamma$ is a constant in $\|\mathbf B\|, \|\mathbf C\|, \mu_{\textup{SSM}}$, and $\mu_\MIX$.
\end{restatable}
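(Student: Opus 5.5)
The plan mirrors the proof of \Cref{thm:sens_noGM}: set up a two-index recurrence for the sensitivities, then unroll it by induction on $D$. Throughout, fix $v$ and write
\[
G(\ell,d) := \left\|\frac{\partial \mathbf S_{\text{out}}^{(\ell,d)}}{\partial \x_v}\right\|, \qquad G(\ell,0) := \left\|\frac{\partial \mathbf S_{\text{in}}^{(\ell,1)}}{\partial \x_v}\right\|,
\]
with $G(\ell,d)=0$ for $\ell<0$. Every block receives as input the outputs of the previous block, and these outputs feed $\mathbf S_{\text{in}}^{(\cdot,d)}$. Applying \Cref{thm:sens_1gsm} to block $d$ at an arbitrary sequence position $\ell$ (the proof there works for any position, not just $L$) gives the one-step recurrence
\[
G(\ell,d) \le \gamma \sum_{j=0}^{\ell} G(j,d-1) + \gamma\|\mAg\| \sum_{j=0}^{\ell-1} G(j,d-1),
\]
with $\gamma=\mu_\MIX\mu_\SSM\|\mathbf B\|\|\mathbf C\|$.

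Next, rewrite this recurrence as a convolution. Define the kernel $K(i) := \mathbf 1[i\ge 0] + \|\mAg\|\,\mathbf 1[i\ge 1]$. Then
\[
G(\ell,d) \le \gamma \sum_{j} K(\ell-j)\, G(j,d-1).
\]
Unrolling $D$ times, we claim
\[
G(L,D) \le \gamma^D \sum_{k=0}^{L} K^{*D}(k)\, G(L-k,0),
\]
where $K^{*D}$ is the $D$-fold discrete convolution. This is proved by induction on $D$. The base case $D=0$ is trivial with $K^{*0}=\delta_0$. The inductive step substitutes the induction hypothesis and swaps the order of summation exactly as in \Cref{thm:sens_noGM}. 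Nonnegativity of all terms justifies every inequality.

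The main step is computing $K^{*D}(k)$ combinatorially. Interpret a path of length $D$ as a choice, at each block, of an increment $i_t\ge 0$ with $\sum_t i_t = k$. Each block contributes weight either $1$ (no graph-mixing) or $\|\mAg\|$ (graph-mixing used, which forces $i_t\ge 1$). Choose the set of $m$ blocks that use graph mixing; there are $\binom{D}{m}$ such sets, and $m\le\min\{k,D\}$. In those blocks, subtract $1$ from $i_t$. This leaves $D$ nonnegative increments summing to $k-m$, and their number is $\binom{k-m+D-1}{D-1}$ by stars and bars (equivalently, by the hockey-stick identity as in \Cref{thm:sens_noGM}). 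Hence
\[
K^{*D}(k)=\sum_{m=0}^{\min\{k,D\}}\binom{D}{m}\binom{k-m+D-1}{D-1}\|\mAg\|^m,
\]
which gives the stated bound.

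I expect the main obstacle to be bookkeeping rather than analysis. Three points need care. First, the index shift in the mixing term ($\ell-1$ versus $\ell$) must correctly produce the $i_t\ge1$ constraint. Second, the boundary element $\mathbf S^{(0)}_{\text{MIX}}$ has no mixing term, but this is only helpful, since it makes the bound looser. Third, the constant $\gamma$ must be taken uniform across blocks; this requires assuming the same regularity constants and system-matrix norms in each block, or replacing them by their maxima.
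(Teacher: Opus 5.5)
Your proposal is correct and has the same skeleton as the paper's proof. Both use the one-step recurrence from \Cref{thm:sens_1gsm}, applied at every position with $\mathbf S_{\text{in}}^{(\cdot,d)}=\mathbf S_{\text{out}}^{(\cdot,d-1)}$, followed by induction on $D$. Where you differ is in how the closed-form coefficient is obtained.

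The paper carries the explicit double sum as its induction hypothesis and verifies it directly. It shifts $k\mapsto k-1$, $j\mapsto j-1$, $m\mapsto m-1$ in the mixing term, merges $\binom{D-1}{m}+\binom{D-1}{m-1}$ with Pascal's identity, swaps sums, and collapses the inner sum with the hockey-stick identity.

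You split the argument in two. The induction only shows that the coefficient is $K^{*D}(k)$, which is essentially associativity of convolution. Evaluating $K^{*D}(k)$ is then a separate one-shot count: compositions of $k$ into $D$ nonnegative parts, $m$ of which are forced positive. Equivalently, it is the $z^k$ coefficient of $(1+\|\mAg\|z)^D(1-z)^{-D}$. Your route buys three things: it is shorter and less error-prone; it avoids the paper's ad hoc conventions for $\binom{\cdot}{-1}$ in the $D=0$ base case, since $K^{*0}=\delta_0$; and it turns the paper's informal remark that the binomials ``count computational paths'' into the actual proof. The paper's route needs no convolution formalism and stays within explicit binomial identities, at the cost of heavier index bookkeeping.

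One small correction to your second bookkeeping point. Under the main-text definition $\mathbf S^{(0)}_{\text{MIX}}=\mathbf S^{(0)}_{\text{FFN}}$, position $0$ lacks not only the mixing summand but also the application of $\sigma_\MIX$, and hence the factor $\mu_\MIX$. The missing summand is harmless, since the sum $\sum_{j=0}^{-1}$ is empty anyway. The missing factor is not automatically harmless: if $\mu_\MIX<1$, the inequality $G(0,d)\le\gamma\, G(0,d-1)$ with $\gamma=\mu_\MIX\mu_\SSM\|\mathbf B\|\|\mathbf C\|$ is not justified, so the boundary can make the bound too tight rather than looser. Taking $\gamma:=\max\{1,\mu_\MIX\}\,\mu_\SSM\|\mathbf B\|\|\mathbf C\|$ fixes this and is still a constant in the allowed quantities. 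The paper's proof glosses over the same point.
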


\begin{proof}

We use $G(L,D) := \|\partial \mathbf S_{\text{out}}^{(L,D)} / \partial \x_v\|$ from earlier. By \Cref{thm:sens_1gsm} we have:

\begin{align}
    G(L,D)
    &\le \gamma\sum_{k=0}^L G(k,D-1) + \gamma\|\mathbf A_\G\|\sum_{k=0}^{L-1} G(k, D-1) \label{eq:rec_2}
\end{align}

\paragraph{Base cases.} If $D = 0$ we apply no SSM blocks, thus we have $\mathbf S_{\text{out}}^{(L,0)} = \mathbf S_{\text{in}}^{(L,1)}$ and 

\begin{align}
    G(L,0) = \left\|\frac{\partial \mathbf S_{\text{in}}^{(L,1)}}{\partial \x_v}\right\|
    \le \gamma^0 \sum_{k=0}^L\left\|\frac{\partial \mathbf S_{\text{in}}^{(L-k,1)}}{\partial \x_v}\right\| \sum_{m=0}^{\min\{k,0\}} \binom{k-m-1}{-1} \binom{0}{m}\|\mathbf A_{\mathcal G}\|^m\label{eq:rec_21}
\end{align}

Let $L = 0$ and $D > 0$. By repeatedly applying \eqref{eq:rec_2} and then \eqref{eq:rec_21} we obtain:

\begin{align*}
    G(0,D) \le \gamma^D G(0,0) \le  
    \gamma^D \sum_{k=0}^0\left\|\frac{\partial \mathbf S_{\text{in}}^{(0-k,1)}}{\partial \x_v}\right\| \sum_{m=0}^{\min\{k,D\}} \binom{k-m+D-1}{D-1} \binom{D}{m}\|\mathbf A_{\mathcal G}\|^m
\end{align*}

\paragraph{Inductive step.} Let $D > 0, L > 0$, then:

\begin{align*}
    G(L,D)
    &\le \gamma\sum_{k=0}^L G(k,D-1) + \gamma\|\mAg\|\sum_{k=0}^{L-1} G(k, D-1) && \eqref{eq:rec_2}\\
    &\le \gamma^D\sum_{k=0}^L \sum_{j=0}^{k} \left\|\frac{\partial \mathbf S_{\text{in}}^{(k-j, 1)}}{\partial \x_v}\right\| \sum_{m=0}^{\min\{j,D-1\}} \binom{j-m+D-2}{D-2}\binom{D-1}{m} \|\mAg\|^m \\
    &\quad + \gamma^D\|\mAg\|\sum_{k=0}^{L-1} \sum_{j=0}^{k} \left\|\frac{\partial \mathbf S_{\text{in}}^{(k-j, 1)}}{\partial \x_v}\right\| \sum_{m=0}^{\min\{j,D-1\}} \binom{j-m+D-2}{D-2}\binom{D-1}{m} \|\mAg\|^m && (\textbf{IH})\\
    &= \gamma^D\sum_{k=0}^L \sum_{j=0}^{k} \left\|\frac{\partial \mathbf S_{\text{in}}^{(k-j, 1)}}{\partial \x_v}\right\| \sum_{m=0}^{\min\{j,D-1\}} \binom{j-m+D-2}{D-2}\binom{D-1}{m} \|\mAg\|^m  \displaybreak[3]\\
    &\quad + \gamma^D\sum_{k=1}^{L} \sum_{j=0}^{k-1} \left\|\frac{\partial \mathbf S_{\text{in}}^{(k\!-\!1\!-\!j, 1)}}{\partial \x_v}\right\| \sum_{m=0}^{\min\{j,D-1\}} \binom{j-m+D-2}{D-2}\binom{D-1}{m} \|\mAg\|^{m+1} && (k \!\mapsto\! k\!-\!1)\\
    &= \gamma^D\sum_{k=0}^L \sum_{j=0}^{k} \left\|\frac{\partial \mathbf S_{\text{in}}^{(k-j, 1)}}{\partial \x_v}\right\| \sum_{m=0}^{\min\{j,D-1\}} \binom{j-m+D-2}{D-2}\binom{D-1}{m} \|\mAg\|^m \\
    &\quad + \gamma^D\sum_{k=1}^{L} \sum_{j=1}^{k} \left\|\frac{\partial \mathbf S_{\text{in}}^{(k-j, 1)}}{\partial \x_v}\right\| \sum_{m=0}^{\min\{j-1,D-1\}} \binom{j\!-\!m\!+\!D\!-\!3}{D-2}\binom{D-1}{m} \|\mAg\|^{m+1} && (j \!\mapsto\! j\!-\!1)\\
    &= \gamma^D\sum_{k=0}^L \sum_{j=0}^{k} \left\|\frac{\partial \mathbf S_{\text{in}}^{(k-j, 1)}}{\partial \x_v}\right\| \sum_{m=0}^{\min\{j,D-1\}} \binom{j-m+D-2}{D-2}\binom{D-1}{m} \|\mAg\|^m \\
    &\quad + \gamma^D\sum_{k=1}^{L} \sum_{j=1}^{k} \left\|\frac{\partial \mathbf S_{\text{in}}^{(k-j, 1)}}{\partial \x_v}\right\| \sum_{m=1}^{\min\{j,D\}} \binom{j-m+D-2}{D-2}\binom{D-1}{m-1} \|\mAg\|^{m} && (m \!\mapsto\! m\!-\!1)\\
    &\le \gamma^D\sum_{k=0}^L \sum_{j=0}^{k} \left\|\frac{\partial \mathbf S_{\text{in}}^{(k-j, 1)}}{\partial \x_v}\right\| \sum_{m=0}^{\min\{j,D-1\}} \binom{j-m+D-2}{D-2}\binom{D-1}{m} \|\mAg\|^m \\
    &\quad + \gamma^D\sum_{k=0}^{L} \sum_{j=0}^{k} \left\|\frac{\partial \mathbf S_{\text{in}}^{(k-j, 1)}}{\partial \x_v}\right\| \sum_{m=0}^{\min\{j,D\}} \binom{j-m+D-2}{D-2}\binom{D-1}{m-1} \|\mAg\|^{m} && (\text{extend; } \tbinom{D-1}{-1}\!=\!0)\\
    &= \gamma^D \sum_{k=0}^L \sum_{j=0}^k \left\|\frac{\partial \mathbf S_{\text{in}}^{(k-j, 1)}}{\partial \x_v}\right\| \sum_{m=0}^{\min\{j,D\}} \binom{j\!-\!m\!+\!D\!-\!2}{D-2}\left[\binom{D\!-\!1}{m} + \binom{D\!-\!1}{m\!-\!1}\right] \|\mAg\|^m && (\text{merge; } \tbinom{D\!-\!1}{D}\!=\!0)\\
    &= \gamma^D \sum_{k=0}^L \sum_{j=0}^k \left\|\frac{\partial \mathbf S_{\text{in}}^{(k-j, 1)}}{\partial \x_v}\right\| \sum_{m=0}^{\min\{j,D\}} \binom{j-m+D-2}{D-2}\binom{D}{m} \|\mAg\|^m && (\text{Pascal's identity})\\
    &= \gamma^D \sum_{k=0}^L \left\|\frac{\partial \mathbf S_{\text{in}}^{(L\!-\!k, 1)}}{\partial \x_v}\right\| \sum_{m=0}^{\min\{k,D\}} \binom{D}{m}\|\mAg\|^m \sum_{s=0}^{k-m} \binom{s+D-2}{D-2} && (\text{re-index; swap sums})\\
    &= \gamma^D \sum_{k=0}^L \left\|\frac{\partial \mathbf S_{\text{in}}^{(L-k, 1)}}{\partial \x_v}\right\| \sum_{m=0}^{\min\{k,D\}} \binom{k\!-\!m\!+\!D\!-\!1}{D-1}\binom{D}{m} \|\mAg\|^m && (\text{Hockey-Stick})
\end{align*}

Establishing the claim.
\end{proof}

\clearpage

\paragraph{Remark.} We want to comment on the assumptions made in \Cref{thm:sens_1gsm}, \Cref{thm:sens_noGM}, and \Cref{thm:sens}.
The $\mu_\SSM$- and $\mu_\MIX$-regularity of $\sigma_\SSM$ and $\sigma_\MIX$ respectively, are justified since we initialize the
$\sigma_\SSM$ and $\sigma_\MIX$ FFN with MLPs consisting of dense hidden layers with GELU activations.
Since GELU is smooth and Lipschitz on bounded domains, each such MLP has bounded Jacobian norm,
ensuring $\mu$-regularity for a finite constant $\mu > 0$ determined by the network weights.

The assumption that $\mathbf A$ is normal with spectral radius $\rho(\mathbf A) \approx 1$ is standard in
structured SSM architectures. In practice, $\mathbf A$ is initialized as a diagonal matrix (e.g., via the
HiPPO framework), which is trivially normal. The condition $\rho(\mathbf A) \approx 1$ reflects the
design choice of placing eigenvalues near the unit circle to enable long-range information propagation
without exponential growth or decay of the hidden state across sequence steps.

\section{Dataset Details} \label{app:dataset details}

\texttt{ECHO}~\citep{miglior_can_2025} (Evaluating Communication over long HOps) is a comprehensive benchmarke designed to evaluate the long-range modeling capabilities of GNNs. The benchmark is categorized into two collections of datasets \texttt{ECHO-Synth} and \texttt{ECHO-Chem}.
\texttt{ECHO-Synth} evaluates GNNs on the synthetic graph property prediction tasks \texttt{ECHO-DIAM}
graph diameter, \texttt{ECHO-ECC} node eccentricity and
\texttt{ECHO-SSSP} single-source shortest path.
\texttt{ECHO-Chem} provides real-world challenges
derived from computational chemistry, requiring 
GNNs to capture non-local atomic interactions.
The dataset \texttt{ECHO-Charge} is focused on predicting atomic charge distributions while 
\texttt{ECHO-Energy} is centered around predicting total molecular energy. Statistics for all datasets used are listed in \Cref{tab:dataset_properties} and \ref{tab:dataset_statistics}.

\begin{table}[h]
\centering
\small
\caption{Summary of ECHO datasets, taken from \citet{miglior_can_2025}.}
\vspace{1mm}
\resizebox{\textwidth}{!}{%
\begin{tabular}{lccc}
\toprule
\textbf{Dataset} & \textbf{Node Features} & \textbf{Edge Features} & \textbf{Target} \\
\midrule
\texttt{ECHO-Synth} & Random scalar, source indicator for \texttt{sssp} & None & \texttt{diam}, \texttt{sssp}, \texttt{ecc} \\
\texttt{ECHO-Chem} & Atomic number, distance from center of mass & Bond type, bond length & Partial charges, Total energy \\
\bottomrule
\end{tabular}
}
\label{tab:dataset_properties}
\end{table}

\begin{table}[h]
    \centering
    \scriptsize
    \setlength{\tabcolsep}{4pt}
    \caption{Statistics of the ECHO datasets, taken from.}
    \begin{tabular}{lrrrrrrrr}
        \toprule
        \textbf{Dataset} & \textbf{\# Graphs} & \textbf{Avg Nodes} & \textbf{Avg Deg.} & \textbf{Avg Edges} &  \textbf{Avg Diam} & \textbf{\# Node Feat} & \textbf{\# Edge Feat} & \textbf{\# Tasks} \\
        \midrule
        \texttt{ECHO-Synth} & 10,080 & 83.69$_{\pm66.24}$ & 2.53$_{\pm1.19}$ & 211.63$_{\pm209.39}$ & 28.50$_{\pm6.92}$ & 2 & None & 3 \\
        \midrule
        \quad \texttt{line} & 1,680 & 75.60$_{\pm27.32}$ & 2.37$_{\pm0.10}$ & 90.10$_{\pm33.89}$ & 28.50$_{\pm6.92}$ & 2 & None & 3\\
        \quad \texttt{ladder} & 1,680 & 56.52$_{\pm13.82}$ & 2.92$_{\pm0.02}$ & 82.54$_{\pm20.72}$ & 28.50$_{\pm6.92}$ & 2 & None & 3\\
        \quad \texttt{grid} & 1,680 & 193.10$_{\pm93.10}$ & 2.95$_{\pm0.12}$ & 288.32$_{\pm145.29}$ & 28.50$_{\pm6.92}$ & 2 & None & 3\\
        \quad \texttt{tree} & 1,680 & 60.42$_{\pm17.17}$ & 1.96$_{\pm0.01}$ & 59.42$_{\pm17.17}$ & 28.50$_{\pm6.92}$ & 2 & None & 3\\ 
        \quad \texttt{caterpillar} & 1,680 & 34.71$_{\pm7.96}$ & 1.94$_{\pm0.02}$ & 33.71$_{\pm7.96}$ & 28.50$_{\pm6.92}$ & 2 & None & 3\\
        \quad \texttt{lobster} & 1,680 & 81.79$_{\pm25.46}$ & 1.97$_{\pm0.01}$ & 80.79$_{\pm25.46}$ & 28.50$_{\pm6.92}$ & 2 & None & 3\\
        \midrule
        \texttt{ECHO-Chem}\\
        \midrule
        \quad \texttt{ECHO-Charge} & 170,367 & 72.49$_{\pm12.48}$ & 2.09$_{\pm0.04}$ & 151.32$_{\pm25.16}$ & 23.54$_{\pm2.54}$ & 2 & 2 & 1  \\
        \quad \texttt{ECHO-Energy} & 196,528 & 73.73$_{\pm13.22}$ & 2.09$_{\pm0.04}$ & 153.84$_{\pm26.58}$ & 23.61$_{\pm2.59}$ & 2 & 2 & 1  \\
        \bottomrule
    \end{tabular}
    \label{tab:dataset_statistics}
\end{table}

\begin{table}[t]
\caption{Overview of the used LRIM Graph Benchmark taken from \citep{mathys_lrim_2025}}\label{tab:dataset_specs}
\centering
\resizebox{0.80\linewidth}{!}{
\begin{tabular}{lrrrrrrrr}
\toprule
Dataset & \makecell{$\sigma$ \\ easy} & \makecell{$\sigma$ \\ hard} & \makecell{Graphs} & Nodes & Edges & \makecell{Avg. Eff. \\ Resistance} & \makecell{Avg. \\ Short. Path} & Diameter \\
\midrule
LRIM-16   & 1.5 & 0.6 & 10,000 &   256   &      512   & 0.49 &   8.03  &   16  \\
\bottomrule
\end{tabular}
}
\end{table}

\begin{table}[t]
    \caption{Statistics of the Peptides LRGB datasets taken from \citep{dwivedi_long_2022}.
    }\label{tab:data_summary}
    \centering
    \setlength\tabcolsep{5pt}
    \scalebox{0.82}{
    \begin{tabular}{l r rrr rr cc}
    \toprule
    \multirow{2}{*}{\textbf{Dataset}} & \multirow{2}{*}{\shortstack[c]{\textbf{Total} \\ \textbf{Graphs}}} & \multirow{2}{*}{\shortstack[c]{\textbf{Total} \\ \textbf{Nodes}}} & \multirow{2}{*}{\shortstack[c]{\textbf{Avg} \\ \textbf{Nodes}}} &
    \multirow{2}{*}{\shortstack[c]{\textbf{Mean} \\\textbf{Deg.}}} &
    \multirow{2}{*}{\shortstack[c]{\textbf{Total} \\ \textbf{Edges}}} &
    \multirow{2}{*}{\shortstack[c]{\textbf{Avg} \\ \textbf{Edges}}} &
    \multirow{2}{*}{\shortstack[c]{\textbf{Avg} \\ \textbf{Short.Path.}}} &
    \multirow{2}{*}{\shortstack[c]{\textbf{Avg} \\ \textbf{Diameter}}}\\
    & & & & & & & & \\\midrule
    Peptides Func& 15,535 & 2,344,859 & 150.94 & 2.04 & 4,773,974 & 307.30 & 20.89$\pm$9.79 & 56.99$\pm$28.72\\
    Peptides Struct & 15,535 & 2,344,859 & 150.94 & 2.04 & 4,773,974 & 307.30 & 20.89$\pm$9.79 & 56.99$\pm$28.72\\
    \bottomrule
    \end{tabular}
    }
\end{table}

\section{Experimental Details}\label{app:experimental details}

All models are implemented using PyTorch and PyTorch Geometric (PyG). We use the official Mamba2 \citep{dao_transformers_2024} implementation for the SSM backbone. We will release the codebase upon acceptance of the paper.

We report the key architectural hyperparameters in Table \ref{tab:gsm_config}. The Feed-Forward Networks (FFN) throughout the model consist of two-layer MLPs with hidden dimension 4× the input dimension and GeLU activation. We apply LayerNorm after each component (SSM, FFN, Graph Mixing) following standard practice. For graph-level tasks, the graph decoder pools node representations using max, sum, and mean pooling, concatenates these representations, and projects them linearly to the hidden dimension before applying the decoder MLP. For node-level tasks, the decoder operates directly on node representations.

\begin{table}[ht]
\centering
\caption{Model component ablation for LGSM on the ECHO-Synth Datasets. We report both logMSE, which is optimized as a loss during training as well as the mean absolute error. The ablated components are either part of the sequence design (length, normalization) or of the block design. There we also ablate either using a simplified SSM with learnable matrices A,B,C or not using the ssm module at all. All values are reported over 3 seeds and are evaluated on the validation split.}
\label{tab:components}
\resizebox{0.8\textwidth}{!}{
\begin{tabular}{lrrrrrr}
\toprule
\multicolumn{1}{c}{Variant} & \multicolumn{2}{c}{DIAM} & \multicolumn{2}{c}{SSSP} & \multicolumn{2}{c}{ECC} \\
\cmidrule(lr){2-3}
\cmidrule(lr){4-5}
\cmidrule(lr){6-7}
 & logMSE $\downarrow$ & MAE $\downarrow$ & logMSE $\downarrow$ & MAE $\downarrow$ & logMSE $\downarrow$ & MAE $\downarrow$ \\
\midrule
baseline & 0.450 \tiny{$\pm$ 0.025} & 0.829 \tiny{$\pm$ 0.033} & {-1.405 \tiny{$\pm$ 0.184}} & 0.077 \tiny{$\pm$ 0.013} & {1.122 \tiny{$\pm$ 0.008}} & {2.381 \tiny{$\pm$ 0.053}} \\
\midrule
seq length 1 & 0.943 \tiny{$\pm$ 0.006} & 1.839 \tiny{$\pm$ 0.065} & 1.738 \tiny{$\pm$ 0.000} & 5.970 \tiny{$\pm$ 0.001} & 1.640 \tiny{$\pm$ 0.000} & 5.497 \tiny{$\pm$ 0.000} \\
seq length 5 & 0.538 \tiny{$\pm$ 0.022} & 0.855 \tiny{$\pm$ 0.028} & 1.545 \tiny{$\pm$ 0.000} & 4.338 \tiny{$\pm$ 0.015} & 1.573 \tiny{$\pm$ 0.002} & 5.068 \tiny{$\pm$ 0.011} \\
seq length 10 & 0.567 \tiny{$\pm$ 0.076} & 0.916 \tiny{$\pm$ 0.092} & 1.231 \tiny{$\pm$ 0.001} & 2.494 \tiny{$\pm$ 0.019} & 1.538 \tiny{$\pm$ 0.002} & 4.848 \tiny{$\pm$ 0.012} \\
seq length 20 & 0.462 \tiny{$\pm$ 0.011} & 0.867 \tiny{$\pm$ 0.023} & 0.380 \tiny{$\pm$ 0.006} & 0.601 \tiny{$\pm$ 0.020} & 1.386 \tiny{$\pm$ 0.018} & 3.807 \tiny{$\pm$ 0.065} \\
seq normalization none & 0.549 \tiny{$\pm$ 0.041} & 0.911 \tiny{$\pm$ 0.069} & -- & -- & 1.201 \tiny{$\pm$ 0.013} & 2.613 \tiny{$\pm$ 0.023} \\
seq normalization row & -- & -- & -0.538 \tiny{$\pm$ 0.073} & 0.167 \tiny{$\pm$ 0.031} & -- & -- \\
\midrule
num blocks 1 & 0.430 \tiny{$\pm$ 0.028} & 0.830 \tiny{$\pm$ 0.029} & -0.119 \tiny{$\pm$ 0.094} & 0.457 \tiny{$\pm$ 0.064} & 1.191 \tiny{$\pm$ 0.016} & 2.721 \tiny{$\pm$ 0.071} \\
num blocks 2 & 0.429 \tiny{$\pm$ 0.052} & {0.816 \tiny{$\pm$ 0.072}} & -0.713 \tiny{$\pm$ 0.161} & 0.181 \tiny{$\pm$ 0.030} & 1.155 \tiny{$\pm$ 0.009} & 2.507 \tiny{$\pm$ 0.028} \\
use graph mixing False & 0.521 \tiny{$\pm$ 0.007} & 0.928 \tiny{$\pm$ 0.048} & -1.224 \tiny{$\pm$ 0.153} & 0.079 \tiny{$\pm$ 0.009} & 1.128 \tiny{$\pm$ 0.002} & 2.423 \tiny{$\pm$ 0.025} \\
use simple ssm True & 0.551 \tiny{$\pm$ 0.105} & 1.023 \tiny{$\pm$ 0.144} & -1.280 \tiny{$\pm$ 0.008} & {0.075 \tiny{$\pm$ 0.012}} & 1.627 \tiny{$\pm$ 0.000} & 5.413 \tiny{$\pm$ 0.003} \\
\bottomrule
\end{tabular}
}
\end{table}

We use the AdamW optimizer with no weight decay, and gradient clipping with maximum norm 1.0. All models are trained to minimize log MSE. Following the ECHO benchmark convention, we normalize the labels during training for ECHO-Synth tasks and use original labels for ECHO-Chem tasks. For ECHO-Energy predictions, we report $10^{\text{prediction}}$ as specified in the ECHO benchmark repository. 

\paragraph{Baseline Comparisons.}
We take the baselines reported by the ECHO benchmark ~\citep{miglior_can_2025}: namely A-DGN ~\citep{gravina_anti-symmetric_2022}, DREW ~\citep{gutteridge_drew_2023}, GCN ~\citep{kipf_semi-supervised_2017}, GCNII ~\citep{chen_simple_2020}, GIN ~\citep{xu_how_2018}, GPS ~\citep{rampasek_recipe_2022}, GRIT ~\citep{ma_graph_2023}, GraphCON ~\citep{rusch_graph-coupled_2022}, PH-DGN ~\citep{heilig_port-hamiltonian_2024}, SWAN ~\citep{gravina_oversquashing_2025}.

\begin{table}[ht]
\centering
\caption{Best LGSM hyperparameter configurations per task on the ECHO benchmark, selected by validation logMSE.}
\label{tab:gsm_config}
\resizebox{\linewidth}{!}{
\begin{tabular}{llrlrrrrr}
\toprule
Task & Seq. Type & Length & Seq. Normalization & Hidden Dim & Blocks & Batch Size & Learning Rate $ \times 10^{-4}$ & Max Epochs \\
\midrule
ECHO-Charge & NBT & 40 & row & 64 & 4 & 32 & 3 & 200 \\
ECHO-Diam & NBT & 40 & row & 64 & 4 & 32 & 3 & 200 \\
ECHO-Ecc & NBT & 40 & row & 64 & 4 & 32 & 3 & 200 \\
ECHO-Energy & NBT & 32 & none & 64 & 2 & 32 & 3 & 200 \\
ECHO-Sssp & NBT & 40 & none & 64 & 4 & 32 & 3 & 200 \\
\bottomrule
\end{tabular}
}
\end{table}

\subsection{LGSM Component Ablation}

We ablate the impact of key design choices of the LGSM blocks, baseline refers to the best performing methods as reported in Table \ref{tab:gsm_config}. Recall, that models are trained to minimize logMSE (and MAE is an eval metric). We vary the sequence lengths, number of blocks, graph mixing amongst other components. Further, we ablate replacing the Mamba architectures with as simplified SSM, which consists of learnable matrices A,B,C. The results in Table \ref{tab:components} indicate that both the sequence length and processing depth, but also the graph mixing and role of the SSM are important for the final performances on the ECHO Synth Datasets.

\subsection{Evaluation on LRIM Graph Benchmark}

In order to further evaluate the capabilities of our LGSM method, we evaluate on the LRIM Graph Benchmark \cite{mathys_lrim_2025}, a benchmark specifically tailored to the precise assessment of long-range capabilities. We evaluate and report the performance in Table \ref{tab:lrim_gsm}, where LGSM performs better than the message-passing baselines and comparable to the computationally more expensive transformer based approaches. Further, we ablate the impact of the information and processing depth, which are central parts of the LGSM design. Figure \ref{fig:lrim} illustrates that both dimensions contribute towards better performance, however, solely increasing processing depth without also increasing the information depth yields diminishing returns.

\subsection{Evaluation on Peptides Datasets}

\begin{table}[h!]

    \centering
    \caption{Results for Peptides-func and Peptides-struct averaged over 3 training seeds. Baseline results are taken from \cite{dwivedi_long_2022}, \cite{gutteridge_drew_2023} and \citet{eliasof_graph_2025}. Note that all MPNN-based methods include structural and positional encoding, whereas ours does not.
    }
    \label{tab:results_lrgb_complete}
    
    \scriptsize
    \begin{tabular}{@{}lcc@{}}
    \hline\toprule
    \multirow{2}{*}{\textbf{Model}} & \textbf{Peptides-func}  & \textbf{Peptides-struct}              
    \\
    & \scriptsize{AP $\uparrow$} & \scriptsize{MAE $\downarrow$}
    \\ \midrule  
    
    \midrule
    \textbf{Multi-hop GNNs}\\
    $\,$ DIGL+MPNN           & 64.69$_{\pm00.19}$         & 0.3173$_{\pm0.0007}$ \\
    $\,$ DIGL+MPNN+LapPE     & 68.30$_{\pm00.26}$         & 0.2616$_{\pm0.0018}$ \\
    $\,$ MixHop-GCN          & 65.92$_{\pm00.36}$         & 0.2921$_{\pm0.0023}$ \\
    $\,$ MixHop-GCN+LapPE    & 68.43$_{\pm00.49}$         & 0.2614$_{\pm0.0023}$ \\
    $\,$ DRew-GCN            & 69.96$_{\pm00.76}$         & 0.2781$_{\pm0.0028}$ \\
    $\,$ DRew-GCN+LapPE             & {71.50$_{\pm0.44}$}   & 0.2536$_{\pm0.0015}$ \\
    $\,$ DRew-GIN            & 69.40$_{\pm0.74}$         & 0.2799$_{\pm0.0016}$ \\
    $\,$ DRew-GIN+LapPE      & {71.26$_{\pm0.45}$}   & 0.2606$_{\pm0.0014}$ \\
    $\,$ DRew-GatedGCN       & 67.33$_{\pm0.94}$         & 0.2699$_{\pm0.0018}$ \\
    $\,$ DRew-GatedGCN+LapPE & 69.77$_{\pm0.26}$         & 0.2539$_{\pm0.0007}$ \\
    
    \midrule
    
    \textbf{Transformers} \\
    $\,$ Transformer+LapPE & 63.26$_{\pm1.26}$ & 0.2529$_{\pm0.0016}$           \\
    $\,$ SAN+LapPE         & 63.84$_{\pm1.21}$ & 0.2683$_{\pm0.0043}$           \\
    $\,$ GraphGPS+LapPE    & 65.35$_{\pm0.41}$ & 0.2500$_{\pm0.0005}$   \\
    \midrule
    \textbf{Modified and Re-evaluated}$^\ddag$\\
    $\,$ GCN
    & 68.60$_{\pm0.50}$ & 0.2460$_{\pm0.0007}$\\
    $\,$ GINE
    & 66.21$_{\pm0.67}$ & 0.2473$_{\pm0.0017}$\\
    $\,$ GatedGCN
    & 67.65$_{\pm0.47}$ & 0.2477$_{\pm0.0009}$\\
    $\,$ DRew-GCN+LapPE
    & 69.45$_{\pm0.21}$        & 0.2517$_{\pm0.0011}$\\
    $\,$ GraphGPS+LapPE
    & 65.34$_{\pm0.91}$ & 0.2509$_{\pm0.0014}$\\
    $\,$ GCN+ & 72.61 $_{\pm0.11}$& 0.2421 $_{\pm0.0016}$\\
    $\,$ GIN+ & 70.59 $_{\pm0.89}$& 0.2429 $_{\pm0.0019}$\\
    $\,$ GatedGCN+ & 70.06 $_{\pm0.33}$& 0.2431 $_{\pm0.0020}$\\

    \midrule
    \textbf{GNNs} \\
    $\,$ GRAND      & 57.89$_{\pm0.62}$ & 0.3418$_{\pm0.0015}$ \\
    $\,$ GraphCON   & 60.22$_{\pm0.68}$ & 0.2778$_{\pm0.0018}$ \\
    $\,$ A-DGN      & 59.75$_{\pm0.44}$ & 0.2874$_{\pm0.0021}$ \\
    $\,$ {SWAN} & 67.51$_{\pm0.39}$ & 0.2485$_{\pm0.0009}$\\
    $\,$ Graph-Mamba & 67.39$_{\pm0.87}$ &  {0.2478$_{\pm0.0016}$} \\
    $\,$ Neural Walker & 70.96$_{\pm0.78}$ &  {0.2463$_{\pm0.0005}$}\\
    $\,$ GMN & 70.71$_{\pm0.83}$ &  {0.2473$_{\pm0.0025}$} \\
    $\,$ GRAMA$_{\textsc{GatedGCN}}$ & 70.49$_{\pm0.51}$  & {0.2459$_{\pm 0.0020}$}  \\
    \midrule
    \textbf{Ours} \\
    $\,$ LGSM (ours)   & 66.85$_{\pm 1.36}$ &  0.2470 $_{\pm 0.0019}$ 	\\
    \bottomrule\hline
    \end{tabular}

    \end{table}

We follow the setup from \citet{tonshoff_where_2024}: AdamW with cosine scheduling for 250 epochs, decoder head, batch size 200 and ran a small selection of hyperparameters (seq. type, length, num\_blocks). 
We did not observe a similar improvement with longer sequences. Instead, on Peptides it seemed to matter most to have choices which implicitly regularize more (shorter sequences, adj) in order to incentivize good generalization. Moreover, we observed consistent (and significant) gaps between train and validation performances which might suggest that regularization is more important for this task compared to the impact of the architecture.

\begin{table}[ht]
\centering
\caption{Evaluation of LGSM on the LRIM Graph Benchmark. We report both asymptotic runtime which is dependent on if the input sequence is preprocessed or not as well as the performance metric in logMSE. The LGSM uses 8 blocks with sequence length 32 and surpasses the message-passing based baselines while being very close in performance to the computationally more expensive transformer based approaches.}
\label{tab:lrim_gsm}
\resizebox{0.8\textwidth}{!}{
\begin{tabular}{lllr}
\toprule
 & Preprocessing & Computation & LRIM-16-hard $\downarrow$\\
\midrule
GIN & - & $\mathcal{O}(L\cdot E)$ & {-2.406 $\pm$ \tiny{0.148}}\\
GatedGCN & - & $\mathcal{O}(L\cdot E)$ & {-3.919 $\pm$ \tiny{0.223}}\\
\midrule
GatedGCN-VN$_G$ & $\mathcal{O}(N)$ & $\mathcal{O}(L\cdot E+L\cdot N)$ & {-3.756 $\pm$ \tiny{0.063}}\\
\midrule
GPS-Base & - & $\mathcal{O}(L\cdot N^2)$ & {-4.340 $\pm$ \tiny{0.101}}\\
GPS-RWSE & \RWSE & $\mathcal{O}(L\cdot N^2)$ & {-4.345 $\pm$ \tiny{0.065}}\\
GPS-LapPE & \LapPE & $\mathcal{O}(L\cdot N^2)$ & {-4.248 $\pm$ \tiny{0.110}}\\
\midrule
LGSM & $\mathcal{O}(L\cdot E)^{\dagger}$ \text{ or }$\mathcal{O}(1)^{\ddagger}$  & $\mathcal{O}(D\cdot L\cdot E)$ & {-4.284 $\pm$ \tiny{0.133}}\\
\bottomrule
\end{tabular}
}
\end{table}

\begin{figure}[t]
    \centering
    \includegraphics[width=0.8\linewidth]{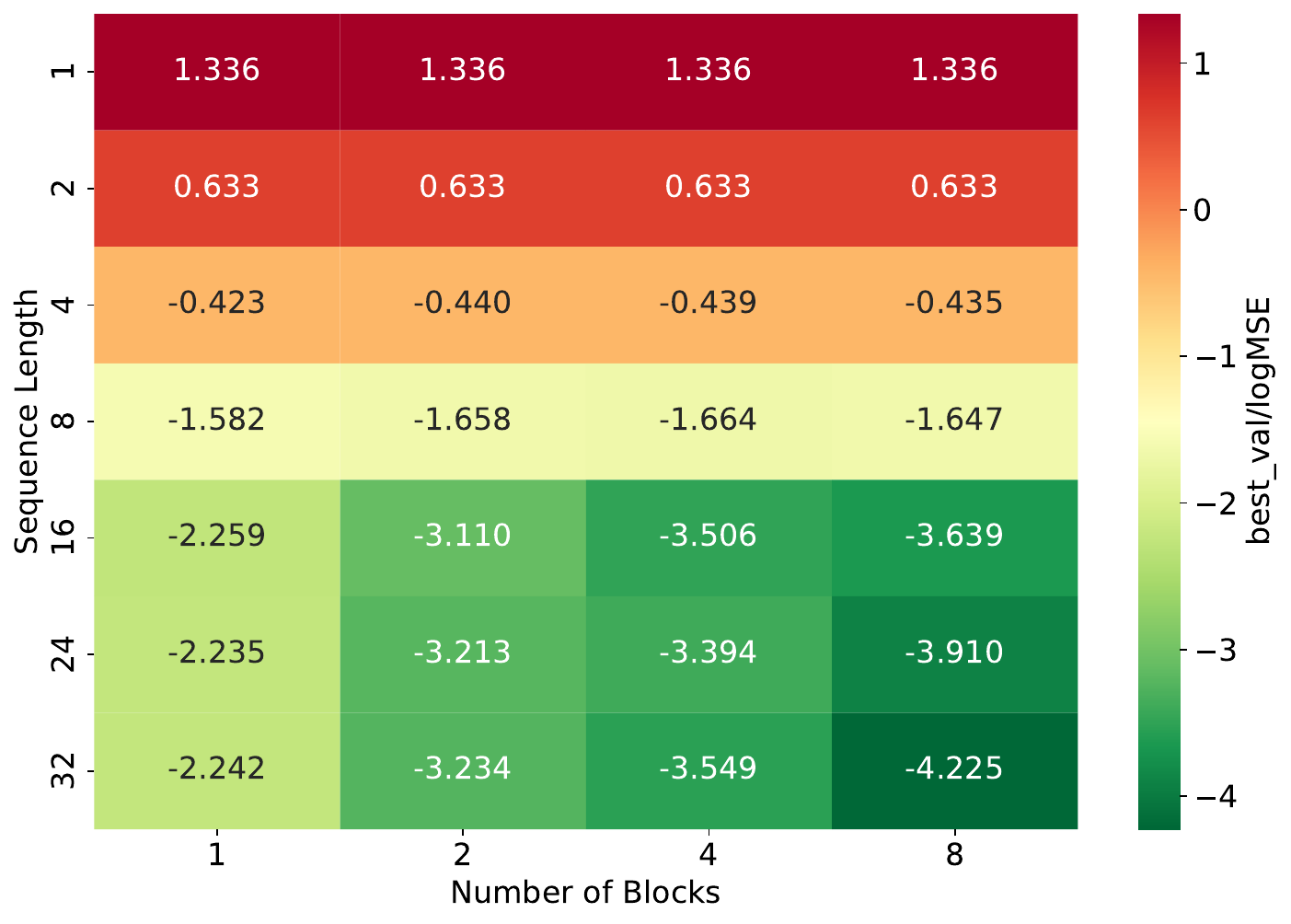}
    \caption{Ablation of the impact of scaling either information depth or processing depth with LGSM evaluated on the LRIM Graph Benchmark. We see that increasing either dimension has a positive effect, while solely increasing processing depth without incorporating additional information yields diminishing returns.}
    \label{fig:lrim}
\end{figure}

\section{Empirical Runtime Measurement}

We quantitatively evaluate our method on sparse ER graph (connectivity log n /n) of size 256 for various sequence lengths (layers respectively). Moreover, we provide results for fixed sequence/layers=16 for variously sized sparse ER graphs. We also consider a simple GCN and GPS baseline based on the pyg modules with FFNs. We also provide LGSM baselines with 1,2 blocks as well as the simplified SSM variant (learned A,B,C matrices instead of Mamba). Note that as mentioned in our limitations the empirical efficiency was not our main focus. As such, we recompute the sequences each forward pass and compute row-wise (which requires more memory than columnwise). Therefore, memory grows proportional to the sequence length and graph size. Measurements are taken over 200 graphs and measured on an RTX 3090 with 24 GB VRAM.
In Figure \ref{fig:timing_seq} we measure timing and memory requirements for a fixed graph of size 256 for multiple sequence lengths. Similarly, in Figure \ref{fig:timing_graph} we report time and memory measurements for a fixed sequence length (or number of layers) of 16 across graphs up to size 8192. 

\begin{figure}
    \centering
    \includegraphics[width=0.9\linewidth]{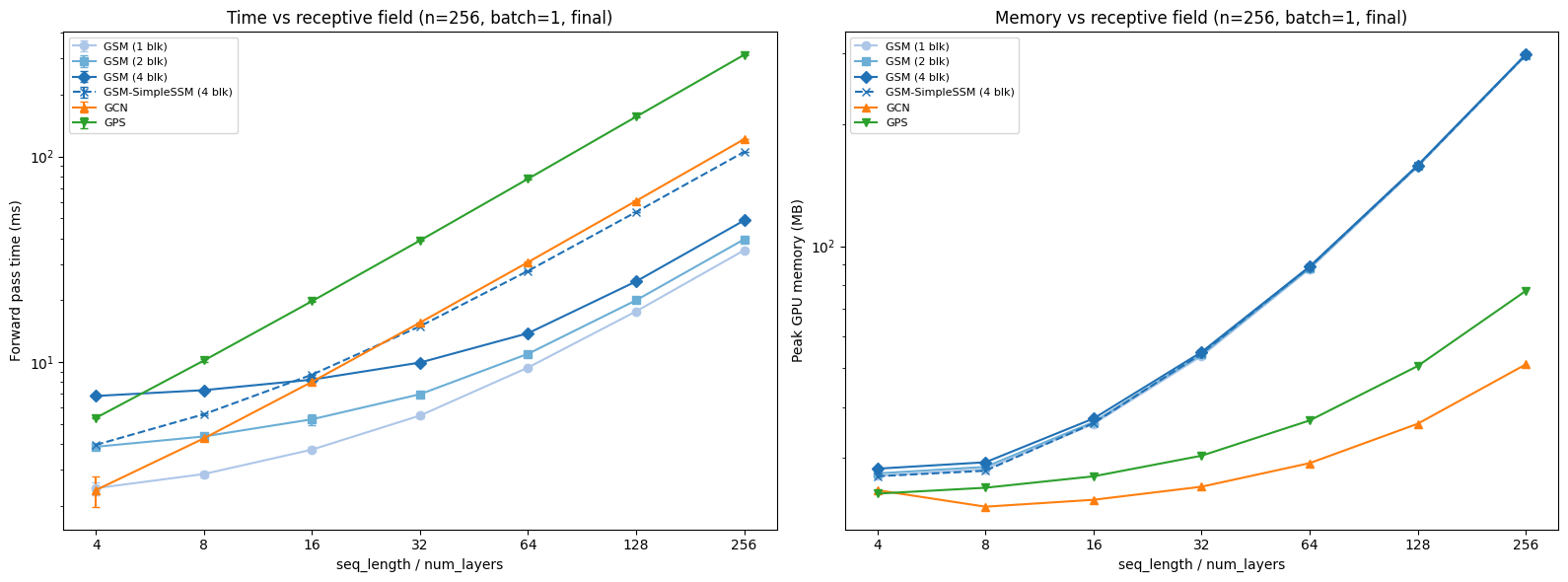}
    \caption{Measurements for varying number of layers on ER graphs of size 256. LGSM memory grows roughly linearly with $L$. Memory use is clearly driven by sequence length rather than number of blocks. The timing of LGSM stays almost constant over a range of small to medium graphs. For large graphs, the forward pass time of GPS blows up the most due to global attention, while LGSM stays between GCN and GPS.}
    \label{fig:timing_seq}
\end{figure}

\begin{figure}
    \centering
    \includegraphics[width=0.9\linewidth]{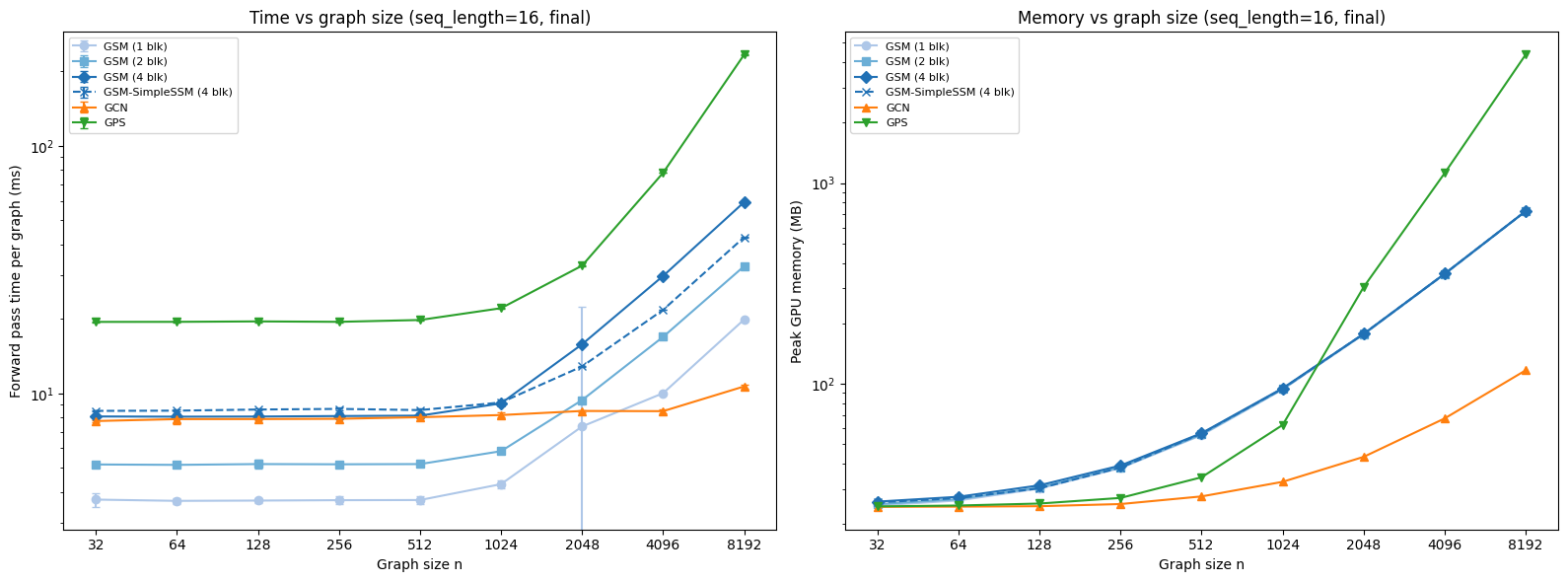}
    \caption{Measurements for fixed number of layers (16) across multiple graph sizes. The timing of LGSM stays almost constant over a range of small to medium graphs. For large graphs, the forward pass time of GPS blows up the most due to global attention, while LGSM stays between GCN and GPS. LGSM memory use increases steadily with graph size, scaling linearly and better than GPS for large graphs.}
    \label{fig:timing_graph}
\end{figure}

\clearpage
\section{Sequence Extraction}
\label{sec:seq_extraction}

We consider two different sequence extraction mechanisms. Let $\An = \mathbf D_\G^{-\frac{1}{2}} \mathbf A_\G \mathbf D_\G^{-\frac{1}{2}}$ be the normalized adjacency matrix, and let $\mathbf B_\G$ be the non-backtracking matrix as defined in Definition~\ref{def:NBT}. Depending on the choice of matrix, the $k$th sequence element is given by $\smash{\mathbf S^{(k)} = \An^k X}$ and $\smash{\mathbf S^{(k)} = \mathbf B_\G^{(k)} X}$, respectively.

We analyze the influence of different nodes in the sequence extraction. The influence of node $w$ on the $k$th sequence element of $v$ is defined as
\begin{equation}
    I_{v,k}(w) = \mathbf e^T \left[\frac{\partial \mathbf s^{(k)}_v}{\partial \mathbf x_w}\right] \mathbf e \Big / \left(\sum_{u \in V} \mathbf e^T \left[\frac{\partial \mathbf s^{(k)}_v}{\partial \mathbf x_u}\right] \mathbf e\right)
\end{equation}
where e is the all ones vector. Note that this definition is similar to the definition of influence distributions in ~\citep{xu_representation_2018}, which analyzes influence distribution in a GNN, whereas we focus on the sequence extraction part.

\begin{proposition}
    Let $\mathbf Z^{(k)}$ be either $\mathbf B_\G^{(k)}$ or $\An^{k}$. The influence of node $w$ on the $k$th sequence element of $v$ is given by
    $I_{v,k}(w) = \mathbf Z_{vw}^{(k)} / \sum_{u \in V} \mathbf Z_{vu}^{(k)}$.
\end{proposition}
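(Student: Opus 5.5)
The plan is to compute the Jacobian blocks $\partial \mathbf s^{(k)}_v / \partial \mathbf x_w$ explicitly from the linear form of the sequence element, then substitute them into the definition of $I_{v,k}(w)$.

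In both cases the $k$th sequence element is $\mathbf S^{(k)} = \mathbf Z^{(k)}\mathbf X$, where $\mathbf Z^{(k)} \in \R^{n\times n}$ depends only on the graph. The matrix $\mathbf B_\G^{(k)}$ is a fixed polynomial in $\mathbf A_\G$ and $\mathbf D_\G$, by the recurrence of Definition~\ref{def:NBT}. The matrix $\An^k$ is likewise built only from $\mathbf A_\G$ and $\mathbf D_\G$. In particular, $\mathbf Z^{(k)}$ does not depend on $\mathbf X$.

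\textbf{Step 1 (row formula).} Reading off row $v$ gives
\begin{align*}
\mathbf s^{(k)}_v = \sum_{u\in\V} \mathbf Z^{(k)}_{vu}\,\mathbf x_u .
\end{align*}
The map $\mathbf X\mapsto \mathbf s^{(k)}_v$ is therefore linear, and its only dependence on $\mathbf x_w$ is through the term $\mathbf Z^{(k)}_{vw}\mathbf x_w$.

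\textbf{Step 2 (Jacobian).} Differentiating gives
\begin{align*}
\frac{\partial \mathbf s^{(k)}_v}{\partial \mathbf x_w} = \mathbf Z^{(k)}_{vw}\,\mathbf I_d .
\end{align*}
Hence
\begin{align*}
\mathbf e^T \left[\frac{\partial \mathbf s^{(k)}_v}{\partial \mathbf x_w}\right] \mathbf e = d\,\mathbf Z^{(k)}_{vw}.
\end{align*}
The same identity holds with $w$ replaced by any $u \in \V$.

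\textbf{Step 3 (substitute).} Putting these expressions into the definition of $I_{v,k}(w)$, the common factor $d$ appears in both numerator and denominator and cancels. What remains is $\mathbf Z^{(k)}_{vw}/\sum_{u}\mathbf Z^{(k)}_{vu}$, which is the claimed formula.

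\textbf{Main obstacle.} The only delicate point is that the quotient must be well defined, i.e.\ the row sum $\sum_u \mathbf Z^{(k)}_{vu}$ must be nonzero. For $\An^k$ all entries are nonnegative, and the row sum is positive whenever $v$ is not isolated. For $\mathbf B_\G^{(k)}$ the entries count non-backtracking walks, so they are also nonnegative. The row sum is positive as long as some non-backtracking walk of length $k$ starts at $v$, which holds for instance when the minimum degree is at least $2$. Otherwise the influence is undefined, and I would state this as the implicit standing assumption under which the proposition is read.
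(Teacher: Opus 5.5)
Your proof is correct and matches the paper's own argument: the Jacobian block $\partial \mathbf s^{(k)}_v/\partial \mathbf x_w$ is $\mathbf Z^{(k)}_{vw}$ times the identity, so $\mathbf e^T[\cdot]\,\mathbf e = d\,\mathbf Z^{(k)}_{vw}$, and the factor $d$ cancels between numerator and denominator. Your remark that the row sum $\sum_u \mathbf Z^{(k)}_{vu}$ must be nonzero is a sound well-definedness condition, which the paper leaves implicit.
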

\begin{proof}
    Since feature dimensions are independent, we have
    $$
        \left[\frac{\partial (\mathbf s^{(k)}_v)_i}{\partial (\mathbf x_w)_j}\right] 
        = \begin{cases} \mathbf Z_{vw}^{(k)} & i=j \\ 0 & i \neq j\end{cases}
    $$
    Hence $\mathbf e^T \left[\frac{\partial \mathbf s^{(k)}_v}{\partial \mathbf x_w}\right] \mathbf e = d\cdot \mathbf Z_{vw}^{(k)}$. After normalizing, $I_{v,k}(w) = \mathbf Z_{vw}^{(k)} / \sum_{u \in V} \mathbf Z_{vu}^{(k)}$
\end{proof}

We show that there can be an exponential difference between the influence of a node $w$ on $v$, depending on which matrix is used for sequence extraction. 

\influence*

\begin{lemma}
    \label{lem:influence_A_nbt}
    Suppose $v,w$ satisfy $\mathrm{dist}(v,w)=k$ and the radius-$k$ neighborhood of $v$ is a $d$-regular tree. Let $I^A_{v,k}(w)$ (resp.\ $\smash{I^B_{v,k}(w)}$) denote the influence of $w$ on the $k$th sequence element at $v$ induced by adjacency extraction $\mathbf S^{(k)}=\An^kX$ (resp.\ non-backtracking extraction $\smash{\mathbf S^{(k)}=\mathbf B^{(k)}_\G X}$). Then
    \[
    \frac{I^A_{v,k}(w)}{I^B_{v,k}(w)}=\left(\frac{d-1}{d}\right)^{k-1},
    \]
    and in particular $I^A_{v,k}(w)$ is exponentially smaller than $I^B_{v,k}(w)$ in $k$. 
\end{lemma}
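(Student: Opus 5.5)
By the preceding Proposition, both influences reduce to entries of the extraction matrices: $I^A_{v,k}(w)=(\An^k)_{vw}/\sum_u(\An^k)_{vu}$ and $I^B_{v,k}(w)=(\mathbf B^{(k)}_\G)_{vw}/\sum_u(\mathbf B^{(k)}_\G)_{vu}$. So the plan is to compute the numerator and denominator exactly in both cases and take the quotient. All quantities for a length-$k$ row depend only on the radius-$k$ ball around $v$. I read the hypothesis as: this ball is a tree and every vertex in it, including those at distance exactly $k$, has degree $d$, i.e.\ it looks like the $d$-regular tree out to depth $k$. If boundary vertices could have arbitrary degree, the factor $\deg(w)^{-1/2}$ coming from $\An$ would spoil the exact identity, so I will state this reading explicitly.

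\emph{Adjacency side.} Write $(\An^k)_{vu}=\sum_{\text{walks } v=u_0,\dots,u_k=u}\prod_{i}(\deg u_{i-1}\deg u_i)^{-1/2}$. Every walk of length $k$ from $v$ stays in the ball, where all degrees equal $d$, so each walk has weight $d^{-k}$. Since $\mathrm{dist}(v,w)=k$ and the ball is a tree, exactly one walk of length $k$ reaches $w$, namely the geodesic, so $(\An^k)_{vw}=d^{-k}$. The row sum counts all $d^k$ walks of length $k$ from $v$, each of weight $d^{-k}$, so it equals $1$. Hence $I^A_{v,k}(w)=d^{-k}$.

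\emph{Non-backtracking side.} First I need that $(\mathbf B^{(t)}_\G)_{vu}$ counts non-backtracking walks of length $t$ from $v$ to $u$. I will prove this by induction on $t$ using the recurrence of Definition~\ref{def:NBT}. Extending a length-$(t+1)$ NBT walk by one step gives $\mathbf A_\G\mathbf B^{(t+1)}_\G$ walks, of which exactly the backtracking extensions must be removed. Each length-$t$ NBT walk ending at $x$ has a predecessor $y$ at step $t+1$, and the subtraction term $(\mathbf D_\G-\mathbf I)\mathbf B^{(t)}_\G$ removes the $\deg(x)-1$ backtracking extensions from $x$ back to $y$. The case $t=2$, with $\mathbf D_\G$ in place of $\mathbf D_\G-\mathbf I$, is special because the first step has no predecessor constraint. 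I expect this bookkeeping of indices in the recurrence to be the main obstacle. Since only the radius-$k$ tree matters, I could alternatively verify the two needed quantities directly from the recurrence on the tree by induction on $t$, which may be cleaner.

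Granting the counting interpretation, the computation is short. In a tree, NBT walks are exactly geodesic paths, so $(\mathbf B^{(k)}_\G)_{vw}=1$. The number of NBT walks of length $k$ from $v$ is $d(d-1)^{k-1}$: $d$ choices for the first step and $d-1$ for each later step. Thus $I^B_{v,k}(w)=1/(d(d-1)^{k-1})$, and the ratio is
\[
\frac{d(d-1)^{k-1}}{d^k}=\left(\frac{d-1}{d}\right)^{k-1}.
\]
For Theorem~\ref{thm:influence}, given $c<1$, pick $d\ge 2$ with $(d-1)/d\le c$. Then take as the infinite family $d$-regular graphs of girth larger than $2k$, e.g.\ high-girth regular graphs or large $d$-regular trees with all vertices within distance $k$ of $v$ having degree $d$, and apply this lemma.
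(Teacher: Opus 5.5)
Your proof is correct and follows the paper's argument: $(\An^k)_{vw}=d^{-k}$ with row sum $1$, $(\mathbf B^{(k)}_\G)_{vw}=1$ with row sum $d(d-1)^{k-1}$, and your explicit requirement that vertices at distance exactly $k$ also have degree $d$ is what the paper's step ``$\An^k=d^{-k}\mathbf A_\G^k$ for $d$-regular graphs'' uses implicitly. The paper takes the walk-counting meaning of $\mathbf B^{(k)}_\G$ for granted, so your induction is extra; in the form $\mathbf A_\G\mathbf B^{(t+1)}_\G-(\mathbf D_\G-\mathbf I)\mathbf B^{(t)}_\G$ the new step is prepended, so put the $\deg-1$ correction at the start vertex rather than the end.

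One caution outside the lemma: $(d-1)/d=1-1/d\ge 1/2$ for every $d\ge 2$, so your choice of $d$ for the theorem is impossible when $c<1/2$. The lemma therefore only yields the theorem for $c\in[1/2,1)$, e.g.\ with $d=2$.
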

\begin{proof}
    Since $\text{dist}(v,w)=k$, there is exactly one path from $v$ to $w$, which means $(\mathbf A_\G^k)_{vw} = 1 = (\mathbf B_\G^{(k)})_{vw}$. We have $\smash{(\An^k)_{vw} = d^{-k}}$, since for $d$-regular graphs, $\smash{\An^k = d^{-k} \mathbf A_\G^k}$.  

    To compute the influence distributions, we need to compute the denominators. In a $d$-regular graph, the number of different walks of length $k$ from $v$ is $d^k$, which is also given by $\sum_{u \in V} (\mathbf A_\G^{k})_{vu}$. Hence, $\sum_{u \in V} (\An^{k})_{vu} = d^k/d^k = 1$. 
    
    For the non-backtracking matrix, normalization is done by $\sum_{u \in V} (\mathbf B_\G^{(k)})_{vu}=d\cdot (d-1)^{k-1}$, since a walk from node $v$ can first use any of $d$ edges and then only has $d-1$ options for the next steps. 

    Putting these together, we have 
    $$
    \frac{I^A_{v,k}(w)}{I^B_{v,k}(w)} = \frac{d^{-k}}{1/d(d-1)^{k-1}} = \left(\frac{d-1}{d}\right)^{k-1}
    $$
\end{proof}

\end{document}